\titlespacing*{\subsection}{6pt}{3pt}{1pt}
\newcommand{\citep}[1]{{\cite{#1}}}
\newcommand{\citet}[1]{{\cite{#1}}}
\newcommand*{\red}[1]{{\color{red} #1}}
\DeclarePairedDelimiterX{\inp}[2]{\langle}{\rangle}{#1, #2}
\DeclarePairedDelimiterX{\abs}[1]{\lvert}{\rvert}{#1}
\DeclarePairedDelimiterX{\norm}[1]{\lVert}{\rVert}{#1}
\DeclarePairedDelimiterX{\cbr}[1]{\{}{\}}{#1} 
\DeclarePairedDelimiterX{\rbr}[1]{(}{)}{#1} 
\DeclarePairedDelimiterX{\sbr}[1]{[}{]}{#1} 
  \DeclareMathOperator{\sgn}{sign}
  \def\sign{\@ifnextchar*{\@sgnargscaled}{\@ifnextchar[{\sgnargscaleas}{\@ifnextchar{\bgroup}{\@sgnarg}{\sgn} }}}
  \def\@sgnarg#1{\sgn\rbr{#1}}
  \def\@sgnargscaled#1{\sgn\rbr*{#1}}
  \def\@sgnargscaleas[#1]#2{\sgn\rbr[#1]{#2}}
  \providecommand{\alphav}{\bm{\alpha}}
  \renewcommand{\aa}{\bm{a}}
  \providecommand{\bb}{\bm{b}}
  \providecommand{\cc}{\bm{c}}
  \providecommand{\pp}{\bm{p}}
  \providecommand{\ww}{\bm{w}}
  \providecommand{\xx}{\bm{x}}
  \providecommand{\yy}{\bm{y}}
  \providecommand{\zz}{\bm{z}}
  \providecommand{\cD}{\mathcal{D}}
\newtheorem{lemma}{Lemma}
\newcommand{\ignore}[1]{}
\newcommand{\e}{\varepsilon}
\definecolor{color1}{RGB}{228,26,28}
\definecolor{color2}{RGB}{55,126,184}
\definecolor{color3}{RGB}{77,175,74}
\definecolor{color4}{RGB}{152,78,163}
\definecolor{color5}{RGB}{255,127,0}
\newcommand{\myitem}[1]{%
\item[\textbf{(#1)}]\protected@edef\@currentlabel{#1}%
}
\colorlet{client}{red!40}
\title{Secure Byzantine-Robust Machine Learning}
\author{%
  Lie He\\
  EPFL \\
  \texttt{\small lie.he@epfl.ch}
  \And
  Sai Praneeth Karimireddy\\
  EPFL \\
  \texttt{\small sai.karimireddy@epfl.ch}
  \And
  Martin Jaggi\\
  EPFL \\
  \texttt{\small martin.jaggi@epfl.ch}
}
\begin{document}

\maketitle

\begin{abstract}
  Increasingly machine learning systems are being deployed to edge servers and devices (e.g. mobile phones) and trained in a collaborative manner. Such distributed/federated/decentralized training raises a number of concerns about the robustness, privacy, and security of the procedure. While extensive work has been done in tackling with robustness, privacy, or security individually, their combination has rarely been studied. In this paper, we propose a secure two-server protocol that offers both input privacy and Byzantine-robustness. In addition, this protocol is communication-efficient, fault-tolerant and enjoys local differential privacy.
\end{abstract}

\section{Introduction}

Recent years have witnessed fast growth of successful machine learning applications based on data collected from decentralized user devices.
Unfortunately, however, currently most of the important machine learning models on a societal level do not have their utility, control, and privacy aligned with the data ownership of the participants.
This issue can be partially attributed to a fundamental conflict between the two leading paradigms of traditional centralized training of models on one hand, and  decentralized/collaborative training schemes on the other hand.
While centralized training violates the privacy rights of participating users, existing alternative training schemes are typically not robust. Malicious participants can sabotage the training system by feeding it wrong data intentionally, known as \textit{data poisoning}.
In this paper, we tackle this problem and propose a novel distributed training framework which offers both \emph{privacy} and \emph{robustness}.

When applied to datasets containing personal data, the use of privacy-preserving techniques is currently required under regulations such as the \textit{General Data Protection Regulation} (GDPR) or \textit{Health Insurance Portability and Accountability Act} (HIPAA).
The idea of training models on decentralized datasets and incrementally aggregating model updates via a central server motivates the federated learning paradigm \citep{mcmahan2016communicationefficient}.
However, the averaging in federated learning, when viewed as a \emph{multi-party computation (MPC)}, does not preserve the \emph{input privacy} because the server observes the models directly.
The \emph{input privacy} requires each party learns nothing more than the output of computation which in this paradigm means the aggregated model updates.
To solve this problem, \textit{secure} aggregation rules as proposed in \citep{bonawitz2017practical} achieve guaranteed input privacy. Such secure aggregation rules have found wider industry adoption recently e.g. by Google on Android phones \citep{bonawitz2019towards,guglblog2020analytics} where input privacy guarantees can offer e.g. efficiency and exactness benefits compared to differential privacy, but both can also be combined.

The concept of Byzantine robustness has received considerable attention in the past few years for practical applications, as a way to make the training process robust to malicious actors.
A Byzantine participant or worker can behave arbitrarily malicious, e.g. sending arbitrary updates to the server. This poses great challenge to the most widely used aggregation rules, e.g. simple average, since a single Byzantine worker can compromise the results of aggregation. A number of Byzantine-robust aggregation rules have been proposed recently \citep{blanchard2017machine,muozgonzlez2017poisoning,alistarh2018byzantine,mhamdi2018hidden,yin2018byzantinerobust,muozgonzlez2019byzantinerobust} and can be used as a building block for our proposed technique.

Combining input privacy and Byzantine robustness, however, has rarely been studied. The work closest to our approach is \citep{pillutla2019robust} which tolerates data poisoning but does not offer Byzantine robustness.
%
Prio \citep{corrigan2017prio} is a private and robust aggregation system relying on secret-shared non-interactive proofs (SNIP). While their setting is similar to ours, the robustness they offer is limited to check the range of the input. Besides, the encoding for SNIP has to be affine-aggregable and is expensive for clients to compute.

In this paper, we propose a secure aggregation framework with the help of two non-colluding honest-but-curious servers. This framework also tolerates server-worker collusion.
In addition, we combine robustness and privacy at the cost of leaking only worker similarity information which is marginal for high-dimensional neural networks. Note that our focus is not to develop new defenses against state-of-the-art attacks, e.g. \citep{baruch2019little,xie2019fall}. Instead, we focus on making \emph{arbitary} current and future distance-based robust aggregation rules (e.g. Krum by \cite{mhamdi2018hidden}, RFA by \cite{pillutla2019robust}) compatible with secure aggregation.

\textbf{Main contributions.} 
We propose a novel distributed training framework which is\vspace{-2mm}
\begin{itemize}[noitemsep,nolistsep]
	\item \textbf{Privacy-preserving:} our method keeps the input data of each user secure against any other user, and against our honest-but-curious servers.
	\item \textbf{Byzantine robust:} our method offers Byzantine robustness and allows to incorporate existing robust aggregation rules, e.g. \citep{blanchard2017machine,alistarh2018byzantine}. The results are exact, i.e. identical to the non-private robust methods.
	\item \textbf{Fault tolerant and easy to use:} our method natively supports workers dropping out or newly joining the training process. It is also easy to implement and to understand for users.
	\item \textbf{Efficient and scalable:} the computation and communication overhead of our method is negligible (less than a factor of 2) compared to non-private methods. Scalability in terms of cost including setup and communication is linear in the number of workers. %
\end{itemize}

\section{Problem setup, privacy, and robustness}

We consider the distributed setup of $n$ user devices, which we call workers, with the help of two additional servers. Each worker $i$ has its own private part of the training dataset. The workers want to collaboratively train a public model benefitting from the joint training data of all participants.

In every training step, each worker computes its own private model update (e.g. a gradient based on its own data) denoted by the vector $\xx_i$.
The aggregation protocol aims to compute the sum $\zz = \sum_{i=1}^n \xx_i$ (or a robust version of this aggregation), which is then used to update a public model. While the result~$\zz$ is public in all cases, the protocol must keep each $\xx_i$ private from any adversary or other workers. 

\textbf{Security model.} 
We consider honest-but-curious servers which do not collude with each other but may collude with malicious workers.
An honest-but-curious server follows the protocol but may try to inspect all messages. We also assume that all communication channels are secure. We guarantee the strong notion of \emph{input privacy}, which means the servers and workers know nothing more about each other than what can be inferred from the public output of the aggregation $\zz$.

\textbf{Byzantine robustness model.} We allow the standard Byzantine worker model which assumes that workers can send arbitrary adversarial messages trying to compromise the process. We assume that a fraction of up to $\alpha$ ($<0.5$) of the workers is Byzantine, i.e. are \emph{malicious} and not follow the protocol.

\textbf{Additive secret sharing.} Secret sharing is a way to split any secret into multiple parts such that no part leaks the secret. Formally, suppose a scalar $a$ is a \emph{secret} and the secret holder shares it with~$k$ parties through \emph{secret-shared values} $\langle a\rangle$.
In this paper, we only consider additive secret-sharing where $\langle a\rangle$ is a notation for the set $\{a_i\}_{i=1}^k$ which satisfy $a=\sum_{p=1}^k a_p$, with $a_p$ held by party $p$. Crucially, it must not be possible to reconstruct $a$ from any $a_p$. For vectors like $\xx$, their secret-shared values $\langle\xx\rangle$ are simply component-wise scalar secret-shared values.

\textbf{Two-server setting.} 
We assume there are two non-colluding servers: model server (S1) and worker server (S2). S1 holds the output of each aggregation and thus also the machine learning model which is public to all workers. S2 holds intermediate values to perform Byzantine aggregation.
Another key assumption is that the servers have no incentive to collude with workers, perhaps enforced via a potential huge penalty if exposed.
It is realistic to assume that the communication link between the two servers S1 and S2 is faster than the individual links to the workers.
To perform robust aggregation, the servers will need access to a sufficient number of \emph{Beaver's triples}. These are data-independent values required to implement secure multiplication in MPC on both servers, and can be precomputed beforehand.
For completeness, the classic algorithm for multiplication is given in 
in Appendix \ref{sec:beaver_s_mpc_protocol}.

\textbf{Byzantine-robust aggregation oracles.} 
Most of existing robust aggregation algorithms rely on distance measures to identity potential adversarial behavior \citep{blanchard2017machine,yin2018byzantinerobust,mhamdi2018hidden,li2019rsa,ghosh2019robust}. All such distance-based aggregation rules can be directly incorporated into our proposed scheme, making them secure.
While many aforementioned papers assume that the workers have i.i.d datasets, our protocol is oblivious to the distribution of the data across the workers. In particular, our protocol also works with schemes such as \citep{li2019rsa,ghosh2019robust,he2020byzantinerobust} designed for non-iid data.


\begin{figure}[t]
	\vskip -1mm
	\centering
	\begin{subfigure}[t]{0.23\textwidth}\centering
		\includegraphics[
			page=2,
			trim={5.5in 2.2in 4.35in 3in},
			clip,
			width=\linewidth,
		]{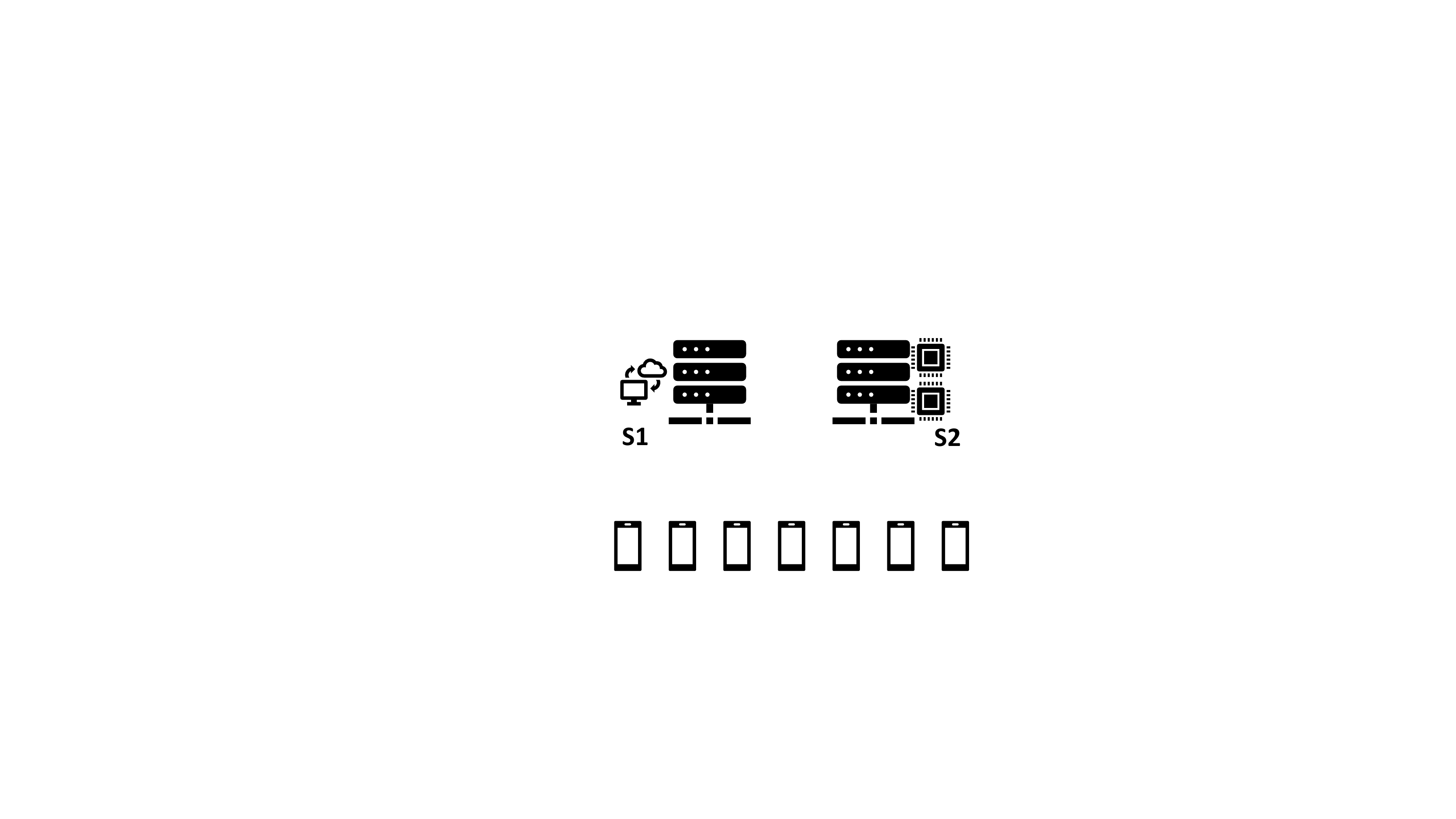}
		\caption{\textbf{WorkerSecretSharing}: each worker $i$ secret-shares its update $\xx_i$ locally and uploads them to \textbf{S1} and \textbf{S2} separately.}
		\label{fig:diagram:1}
	\end{subfigure}
	~
	\begin{subfigure}[t]{0.23\textwidth}\centering
		\includegraphics[
			page=3,
			trim={5.5in 2.2in 4.35in 3in},
			clip,
			width=\linewidth,
		]{figures/MPCDL-diagram}
		\caption{\textbf{RobustWeightSelection}: Compute and reveal $\{ \|\xx_i - \xx_j\|^2\}_{i<j}$ on \textbf{S2} and select a robust set of indices represented by $\pp=\{p_i\}_i$ by calling the Byzantine-robust oracle.}
		\label{fig:diagram:2}
	\end{subfigure}
	~
	\begin{subfigure}[t]{0.23\textwidth}\centering
		\includegraphics[
			page=4,
			trim={5.5in 2.2in 4.35in 3in},
			clip,
			width=\linewidth,
		]{figures/MPCDL-diagram}
		\caption{\textbf{AggregationAndUpdate}:
			Compute and reveal aggregation $\zz=\sum_{i=1}^n  p_i \xx_i$ on \textbf{S1}.
			\textbf{S1} updates the public model.
		}
		\label{fig:diagram:3}
	\end{subfigure}
	~
	\begin{subfigure}[t]{0.23\textwidth}\centering
		\includegraphics[
			page=5,
			trim={5.5in 2.2in 4.35in 3in},
			clip,
			width=\linewidth,
		]{figures/MPCDL-diagram}
		\caption{\textbf{WorkerPullModel}: Each worker $i$ pulls model from \textbf{S1}.}
		\label{fig:diagram:4}
	\end{subfigure}
	\caption{Illustration of \Cref{protocol:two_server:robustdist}. The orange components on servers represent the computation-intensive operations at low communication cost between servers. }
	\label{diagram}
	\vspace{-4mm}
\end{figure}

\section{Secure aggregation protocol: two-server model}%
Each worker first splits its private vector $\xx_i$ into two additive secret shares, and transmits those to each corresponding server, ensuring that neither server can reconstruct the original vector on its own.
The two servers then execute our secure aggregation protocol. On the level of servers, the protocol is a two-party computation (2PC).
In the case of non-robust aggregation, servers simply add all shares (we present this case in detail in \Cref{protocol:two_server:nonrobust}).
In the robust case which is of our main interest here, the two servers exactly emulate an existing Byzantine robust aggregation rule, at the cost of revealing only distances of worker gradients on the server (the robust algorithm is presented in \Cref{protocol:two_server:robustdist}).
Finally, the resulting aggregated output vector $\zz$ is sent back to all workers and applied as the update to the public machine learning model.

\subsection{Non-robust secure aggregation}
In each round, \Cref{protocol:two_server:nonrobust} consists of two stages:\vspace{-1mm}
\begin{itemize}[nolistsep]
	\item \textbf{WorkerSecretSharing} (\Cref{fig:diagram:1}): each worker $i$ randomly splits its private input $\xx_i$ into two additive secret shares $\xx_i=\xx_i^{(1)}+\xx_i^{(2)}$.
	      This can be done e.g. by sampling a large noise $\xi_i$ and then using $\xx_i \pm \xi_i$ as the shares.
	      Worker $i$ sends $\xx_i^{(1)}$ to \textbf{S1} and $\xx_i^{(2)}$ to \textbf{S2}. We write $\langle\xx_i\rangle$ for the secret-shared values distributed over the two servers.
	\item \textbf{AggregationAndUpdate} (\Cref{fig:diagram:3}):
	      Given some weights $\{p_i\}_{i=1}^n$, each server locally computes $\langle \sum_{i=1}^n p_i \xx_i \rangle 
	      $. Then \textbf{S2} sends its share $\langle\sum_{i=1}^n p_i \xx_i\rangle^{(2)}$ to \textbf{S1} so that \textbf{S1} can then compute $\zz=\sum_{i=1}^n p_i \xx_i$. \textbf{S1} updates the public model with $\zz$.
\end{itemize}
Our secure aggregation protocol is extremely simple, and as we will discuss later, has very low communication overhead, does not require cryptographic primitives, gives strong input privacy and is compatible with differential privacy, and is robust to worker dropouts and failures. We believe this makes our protocol especially attractive for federated learning applications.

We now argue about correctness and privacy. It is clear that the output $\zz$ of the above protocol satisfies $\zz = \sum_{i=1}^n p_i \xx_i$, ensuring that all workers compute the right update. Now we argue about the privacy guarantees. We track the values stored by each of the servers and workers:\vspace{-1mm}
\begin{itemize}[nolistsep]
	\item \textbf{S1}: The secret share $\{\xx_i^{(1)}\}_{i=1}^n$ and the sum of other share $\sum_{i=1}^n \xx_i^{(2)}$.
	\item \textbf{S2}: The secret share $\{\xx_i^{(2)}\}_{i=1}^n$.
	\item Worker $i$: $\xx_i$ and $\zz = \sum_{i=1}^n p_i \xx_i$.\vspace{-1mm}
\end{itemize}
Clearly, the workers have no information other than the aggregate $\zz$ and their own data. \textbf{S2} only has the secret share which on their own leak no information about any data. Hence surprisingly, \textbf{S2} learns \emph{no information} in this process. \textbf{S1} has its own secret share and also the sum of the other share. If $n=1$, then $\zz = \xx_i$ and hence \textbf{S1} is allowed to learn everything. If $n > 1$, then \textbf{S1} cannot recover information about any individual secret share $\xx_i^{(2)}$ from the sum. Thus, \textbf{S1} learns $\zz$ and nothing else.

\subsection{Robust secure aggregation}

We now describe how \Cref{protocol:two_server:robustdist} replaces the simple aggregation with any distance-based robust aggregation rule \textbf{Aggr}, e.g. Multi-Krum \citep{blanchard2017machine}. The key idea is to use two-party MPC to securely compute multiplication.\vspace{-1mm}
\begin{itemize}[nolistsep]
	\item \textbf{WorkerSecretSharing} (\Cref{fig:diagram:1}): As before, each worker $i$ secret shares $\langle\xx_i\rangle$ distributed over the two servers \textbf{S1}  and \textbf{S2}.

	\item \textbf{RobustWeightSelection} (\Cref{fig:diagram:2}): After collecting all secret-shared values $\{\langle\xx_i\rangle\}_i$, the servers compute pairwise difference $\{\langle\xx_i - \xx_j\rangle\}_{i<j}$ locally. \textbf{S2} then reveals---to itself exclusively---in plain text all of the pairwise Euclidean distances between workers $\{ \|\xx_i - \xx_j\|^2\}_{i<j}$ with the help of precomputed Beaver's triples and \Cref{protocol:beaver}.
	      The distances are kept private from \textbf{S1} and workers.
	      \textbf{S2} then feeds these distances to the distance-based robust aggregation rule \textbf{Aggr}, returning (on \textbf{S2}) a weight vector $\pp = \{p_i\}_{i=1}^n$ (a seleced subset indices can be converted to a vector of binary values), and secret-sharing them with \textbf{S1} for aggregation.

	\item \textbf{AggregationAndUpdate} (\Cref{fig:diagram:3}):
	      Given weight vector $\pp$ from previous step, we would like \text{S1} to compute $\sum_{i=1}^n p_i \xx_i$. \textbf{S2} secret shares with \textbf{S1} the values of $\{\langle p_i \rangle \}$ instead of sending in plain-text since they may be private. Then, \textbf{S1} reveals to itself, but not to \textbf{S2}, in plain text the value of $\zz = \sum_{i=1}^n p_i \xx_i$ using secret-shared multiplication and updates the public model.

	\item \textbf{WorkerPullModel} (\Cref{fig:diagram:4}): Workers pull the latest public model on \textbf{S1} and update it locally.
\end{itemize}

The key difference between the robust and the non-robust aggregation scheme is the weight selection phase where \textbf{S2} computes all pairwise distances and uses this to run a robust-aggregation rule in a black-box manner. \textbf{S2} computes these distances i) without leaking any information to \textbf{S1}, and ii) without itself learning anything other than the pair-wise distances (and in particular none of the actual values of $\xx_i$). To perform such a computation, \textbf{S1} and \textbf{S2} use precomputed \emph{Beaver's triplets} (\Cref{protocol:beaver} in the Appendix), which can be made available in a scalable way~\citep{smart2018taas}.

\subsection{Salient features}
Overall, our protocols are very resource-light and straightforward from the perspective of the workers. Further, since we use Byzantine-robust aggregation, our protocols are provably fault-tolerant even if a large fraction of workers misbehave. This further lowers the requirements of a worker. In particular,

\textbf{Communication overhead.}
In applications, individual uplink speed from worker and servers is typically the main bottleneck, as it is typically much slower than downlink, and the bandwidth between servers can be very large. For our protocols, the time spent on the uplink is within a factor of $2$ of the non-secure variants. Besides, our protocol only requires one round of communication, which is an advantage over interactive proofs.

\textbf{Fault tolerance.} The workers in \Cref{protocol:two_server:nonrobust} and \Cref{protocol:two_server:robustdist} are completely stateless across multiple rounds and there is no \emph{offline} phase required. This means that workers can start participating in the protocols simply by pulling the latest public model. Further, our protocols are unaffected if some workers drop out in the middle of a round. Unlike in \citep{bonawitz2017practical}, there is no entanglement between workers and we don't face unbounded recovery issues.

\textbf{Compatibility with local differential privacy.} One byproduct of our protocol can be used to convert differentially private mechanisms, such as \citep{abadi2016deep} which only of the aggregate model which guarantees privacy, into the stronger \emph{locally} differentially private mechanisms which guarantee user-level privacy.

\textbf{Other Byzantine-robust oracles.} We can also use some robust-aggregation rules which are not based on pair-wise distances such as Byzantine SGD \citep{alistarh2018byzantine}. Since the basic structures are very similar to \Cref{protocol:two_server:robustdist}, we put \Cref{protocol:two_server:byzantinesgd} in the appendix.

\textbf{Security.} The security of \Cref{protocol:two_server:nonrobust} is straightforward as we previously discussed. The security of \Cref{protocol:two_server:robustdist} again relies on the separation of information between \textbf{S1} and \textbf{S2} with neither the workers nor \textbf{S1} learning anything other than the aggregate $\zz$. We will next formally prove that this is true even in the presence of malicious workers.

\begin{algorithm}[t]
	\caption{Two-Server Secure Aggregation (Non-robust variant)}
	\label{protocol:two_server:nonrobust}
	\renewcommand{\algorithmiccomment}[1]{#1}
	\begin{algorithmic}
		\STATE \textbf{\underline{Setup}}: $n$ workers (non-Byzantine) with private vectors $\xx_i$.  Two non-colluding servers \textbf{S1} and \textbf{S2}.

		\STATE \textbf{\underline{Workers}}:  \ (\textbf{WorkerSecretSharing})
		\begin{enumerate}[nolistsep,noitemsep]
			\item split private $\xx_i$ into additive secret shares $\langle \xx_i\rangle=\{\xx_i^{(1)}, \xx_i^{(2)}\}$ (such that $\xx_i = \xx_i^{(1)} \!+ \xx_i^{(2)}$)
			\item send $\xx_i^{(1)}$ to \textbf{S1} and $\xx_i^{(2)}$ to \textbf{S2}
		\end{enumerate}

		\STATE \textbf{\underline{Servers}}:
		\begin{enumerate}[nolistsep,noitemsep]
			\item $\forall~i$, \textbf{S1} collects ${\xx}_i^{(1)}$ and \textbf{S2} collects ${\xx}_i^{(2)}$
			\item (\textbf{AggregationAndUpdate}):
			      \begin{enumerate}[nolistsep,noitemsep]
				      \item On \textbf{S1} and \textbf{S2}, compute $\langle\sum_{i=1}^n {\xx}_i\rangle$ locally
				      \item \textbf{S2} sends its share of $\langle\sum_{i=1}^n {\xx}_i\rangle$ to \textbf{S1}
				      \item \textbf{S1} reveals $\zz=\sum_{i=1}^n {\xx}_i$ to everyone
			      \end{enumerate}
		\end{enumerate}
	\end{algorithmic}
\end{algorithm}

\begin{algorithm}[t]
	\caption{Two-Server Secure Robust Aggregation (Distance-Based)
	}
	\label{protocol:two_server:robustdist}
	\renewcommand{\algorithmiccomment}[1]{#1}
	\begin{algorithmic}
		\STATE \textbf{\underline{Setup}}:  $n$ workers, $\alpha n$ 
		of which are Byzantine. Two non-colluding servers \textbf{S1} and \textbf{S2}.

		\STATE \textbf{\underline{Workers}}:  \ (\textbf{WorkerSecretSharing})
		\begin{enumerate}[nolistsep,noitemsep]
			\item split private $\xx_i$ into additive secret shares $\langle \xx_i\rangle=\{\xx_i^{(1)}, \xx_i^{(2)}\}$ (such that $\xx_i = \xx_i^{(1)} \!+ \xx_i^{(2)}$)
			\item send $\xx_i^{(1)}$ to \textbf{S1} and $\xx_i^{(2)}$ to \textbf{S2}
		\end{enumerate}

		\STATE \textbf{\underline{Servers}}:
		\begin{enumerate}[nolistsep,noitemsep]
			\item $\forall~i$, \textbf{S1} collects gradient ${\xx}_i^{(1)}$ and \textbf{S2} collects ${\xx}_i^{(2)}$
			\item (\textbf{RobustWeightSelection}):
			      \begin{enumerate}[nolistsep,noitemsep]
				      \item For each pair $(\xx_i,~\xx_j)$ compute their Euclidean distance $(i < j)$:
				            \begin{itemize}[nolistsep,noitemsep]
					            \item On \textbf{S1} and \textbf{S2}, compute $\langle{\xx}_i-{\xx}_j\rangle=\langle{\xx}_i\rangle-\langle{\xx}_j\rangle$ locally
					            \item Use precomputed Beaver's triples (see \Cref{protocol:beaver}) to compute the distance $\norm{{\xx}_i-{\xx}_j}^2$
				            \end{itemize}
				      \item \textbf{S2} perform robust aggregation rule  $\pp=$\textbf{Aggr}($\{\norm{{\xx}_i-{\xx}_j}^2\}_{i<j}$)
				      \item \textbf{S2} secret-shares $\langle\pp\rangle$ with \textbf{S1}
			      \end{enumerate}
			\item (\textbf{AggregationAndUpdate}):
			      \begin{enumerate}[nolistsep,noitemsep]
				      \item On \textbf{S1} and \textbf{S2}, use MPC multiplication to compute
				            $\langle\sum_{i=1}^n p_i {\xx}_i\rangle$ locally
				      \item \textbf{S2} sends its share of $\langle\sum_{i=1}^n p_i {\xx}_i\rangle^{(2)}$ to \textbf{S1}
				      \item  \textbf{S1} reveals $\zz=\sum_{i=1}^n p_i {\xx}_i$ to all workers.
			      \end{enumerate}
		\end{enumerate}

		\STATE \textbf{\underline{Workers}}:
		\begin{enumerate}[nolistsep,noitemsep]
			\item (\textbf{WorkerPullModel}): Collect $\zz$ and update model locally
		\end{enumerate}

	\end{algorithmic}
\end{algorithm}

\section{Theoretical guarantees} 
\label{sec:privacy_proof}

\subsection{Exactness}
In the following lemma we show that \Cref{protocol:two_server:robustdist} gives the exact same result as non-privacy-preserving version.
\begin{lemma}[Exactness of \Cref{protocol:two_server:robustdist}]\label{lemma:exactness}
	The resulting $\zz$ in \Cref{protocol:two_server:robustdist} is identical to the output of the non-privacy-preserving version of the used robust aggregation rule.
\end{lemma}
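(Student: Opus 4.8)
The plan is to trace the output vector $\zz$ through the four phases of \Cref{protocol:two_server:robustdist} and verify that every operation is carried out without loss, so that the two servers jointly evaluate exactly the same function of the inputs as the plaintext rule. The only nontrivial ingredients are (i) linearity of additive secret sharing and (ii) correctness of Beaver's multiplication (\Cref{protocol:beaver}); I would invoke both as black boxes rather than reprove them.

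First I would note that after \textbf{WorkerSecretSharing} each worker $i$ — honest or Byzantine — has contributed some pair $(\xx_i^{(1)},\xx_i^{(2)})$, and we set $\xx_i := \xx_i^{(1)}+\xx_i^{(2)}$. This $\xx_i$ is precisely the vector that the non-privacy-preserving rule would see from that worker: a Byzantine worker may pick $\xx_i$ arbitrarily in both worlds, so the two executions are fed identical inputs, and exactness is a statement about faithful emulation, not about robustness. Since additive sharing is linear, the local subtraction $\langle\xx_i\rangle-\langle\xx_j\rangle$ produces a valid sharing of $\xx_i-\xx_j$ exactly, with no error introduced.

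Next, for each pair $i<j$, applying \Cref{protocol:beaver} coordinate-wise to $\langle\xx_i-\xx_j\rangle$ and summing over coordinates yields exactly $\norm{\xx_i-\xx_j}^2$ revealed on \textbf{S2}, by correctness of Beaver multiplication. Hence \textbf{S2} holds the exact matrix of pairwise squared distances $\{\norm{\xx_i-\xx_j}^2\}_{i<j}$, which is exactly the input a distance-based rule \textbf{Aggr} consumes in the non-private setting; therefore the weight vector $\pp=\textbf{Aggr}(\{\norm{\xx_i-\xx_j}^2\}_{i<j})$ computed on \textbf{S2} is identical to the one the plaintext rule outputs. In \textbf{AggregationAndUpdate}, \textbf{S2} secret-shares $\langle\pp\rangle$, both servers run Beaver multiplication to form $\langle\sum_i p_i\xx_i\rangle$, \textbf{S2} forwards its share, and \textbf{S1} reconstructs $\zz=\sum_{i=1}^n p_i\xx_i$; again each multiplication and each reconstruction is exact, so $\zz$ equals the weighted aggregate the non-private rule returns. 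Chaining these equalities gives the claim.

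The one point that I expect needs a word of care — and is really the only subtlety — is the arithmetic domain: additive secret sharing for information-theoretic privacy is performed over a finite ring, and real-valued gradients must be embedded via a fixed-point encoding. I would therefore state exactness under the standing assumption that the modulus and the fixed-point precision are chosen large enough that no overflow occurs and the truncation inside Beaver multiplication is exact (equivalently, one may reason over $\R$ with ideal arithmetic). Granting this, the proof is a short induction over the protocol steps with no estimates involved; the work is bookkeeping rather than analysis.
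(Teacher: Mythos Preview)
Your proposal is correct and follows essentially the same approach as the paper: trace the protocol phase by phase, invoking linearity of additive sharing and correctness of Beaver's multiplication to conclude that \textbf{S2} recovers the exact pairwise distances, hence the same $\pp$, hence the same $\zz$. You are in fact more thorough than the paper's own proof (which is a three-sentence walkthrough that omits the final aggregation step and the arithmetic-domain caveat you raise).
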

\begin{proof}
	After secret-sharing $\xx_i$ to $\langle\xx_i\rangle$ to two servers, \Cref{protocol:two_server:robustdist} performs local differences $\{\langle\xx_i - \xx_j\rangle\}_{i<j}$. Using shared-values multiplication via Beaver's triple, \textbf{S2} obtains the list of true Euclidean distances $\{\|\xx_i - \xx_j\|^2\}_{i<j}$. The result is fed to a \emph{distance-based} robust aggregation rule oracle, all solely on \textbf{S2}. 
	Therefore, the resulting indices $\{p_i\}_i$ as used in $\zz:=\Sigma_{i=1}^n p_i \xx_i$ are identical to the aggregation of non-privacy-preserving robust aggregation.
\end{proof}
With the exactness of the protocol established, we next focus on the privacy guarantee.

\subsection{Privacy}
We prove probabilistic (information-theoretic) notion of privacy which gives the strongest guarantee possible. Formally, we will show that the distribution of the secret does not change even after being conditioned on all observations made by all participants, i.e. each  worker $i$, \textbf{S1} and \textbf{S2}. This implies that the observations carry absolutely no information about the secret. Our results rely on the existence of simple additive secret-sharing protocols as discussed in the Appendix.

Each worker $i$ only receives the final aggregated $\zz$ at the end of the protocol and is not involved in any other manner. Hence no information can be leaked to them. We will now examine \textbf{S1}. The proofs below rely on Beaver's triples which we summarize in the following lemma.
\begin{lemma}[Beaver's triples]
	Suppose we secret share $\langle x \rangle$ and $\langle y \rangle$ between \textbf{S1} and \textbf{S2} and want to compute $xy$ on \textbf{S2}. There exists a protocol which enables such computation which uses precomputed shares $BV = (\langle a \rangle, \langle b \rangle, \langle c \rangle)$ such that \textbf{S1} does not learn anything and \textbf{S2} only learns~$xy$.
\end{lemma}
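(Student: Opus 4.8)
The plan is to instantiate the classical Beaver multiplication protocol over the (finite) ring on which the additive secret sharing is defined and then verify its two required guarantees directly. Recall the ingredients: the precomputed triple $BV=(\langle a\rangle,\langle b\rangle,\langle c\rangle)$ satisfies $c=ab$ with $a,b$ uniform, and all six shares $a^{(1)},a^{(2)},b^{(1)},b^{(2)},c^{(1)},c^{(2)}$ individually uniform subject to $a^{(1)}+a^{(2)}=a$, $b^{(1)}+b^{(2)}=b$, $c^{(1)}+c^{(2)}=c$. The protocol: each server locally forms $\langle x-a\rangle=\langle x\rangle-\langle a\rangle$ and $\langle y-b\rangle=\langle y\rangle-\langle b\rangle$; the two servers exchange these shares so that both obtain $e:=x-a$ and $d:=y-b$; then \textbf{S1} sets $\langle xy\rangle^{(1)}:=e\,b^{(1)}+d\,a^{(1)}+c^{(1)}$ and \textbf{S2} sets $\langle xy\rangle^{(2)}:=ed+e\,b^{(2)}+d\,a^{(2)}+c^{(2)}$; finally \textbf{S1} sends $\langle xy\rangle^{(1)}$ to \textbf{S2}, which outputs $\langle xy\rangle^{(1)}+\langle xy\rangle^{(2)}$. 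First I would check correctness, a one-line expansion: $\langle xy\rangle^{(1)}+\langle xy\rangle^{(2)}=ed+e(b^{(1)}+b^{(2)})+d(a^{(1)}+a^{(2)})+(c^{(1)}+c^{(2)})=ed+eb+da+ab=(e+a)(d+b)=xy$.

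Next I would establish \textbf{S1}'s privacy. Its entire view consists of $(x^{(1)},y^{(1)},a^{(1)},b^{(1)},c^{(1)})$ together with the two messages $x^{(2)}-a^{(2)}$ and $y^{(2)}-b^{(2)}$ received from \textbf{S2}; it transmits $\langle xy\rangle^{(1)}$ but receives nothing afterward, so in particular it never sees $xy$. The five shares are uniform and independent of $(x,y)$ by the secret-sharing guarantee. Since $a^{(2)}$ is uniform and independent of everything \textbf{S1} holds (it resides only on \textbf{S2}), the message $x^{(2)}-a^{(2)}=x-x^{(1)}-a^{(2)}$ is, conditioned on the rest of the view, uniform and independent of $(x,y)$; symmetrically for $y^{(2)}-b^{(2)}$. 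Hence \textbf{S1}'s view is distributed independently of $(x,y)$, so it learns nothing.

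Finally, \textbf{S2}'s privacy, which I would prove by exhibiting a simulator that reconstructs \textbf{S2}'s view from the output $xy$ alone. \textbf{S2}'s view is $(x^{(2)},y^{(2)},a^{(2)},b^{(2)},c^{(2)})$, the messages $x^{(1)}-a^{(1)},y^{(1)}-b^{(1)}$ (equivalently $e$ and $d$, recovered by adding its own shares), and the final message $\langle xy\rangle^{(1)}$. The simulator samples $\widetilde x^{(2)},\widetilde y^{(2)},\widetilde a^{(2)},\widetilde b^{(2)},\widetilde c^{(2)},\widetilde e,\widetilde d$ independently and uniformly and sets $\widetilde{\langle xy\rangle^{(1)}}:=xy-(\widetilde e\,\widetilde d+\widetilde e\,\widetilde b^{(2)}+\widetilde d\,\widetilde a^{(2)}+\widetilde c^{(2)})$. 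Matching this with the real execution: $x^{(2)},y^{(2)},a^{(2)},b^{(2)}$ are uniform; $e=x-a$ and $d=y-b$ are uniform and independent of $(x,y)$ because $a,b$ are uniform; $c^{(2)}=ab-c^{(1)}$ is uniform and independent of all the preceding because $c^{(1)}$ is a fresh uniform share; and the correctness identity forces $\langle xy\rangle^{(1)}=xy-\langle xy\rangle^{(2)}=xy-(ed+e\,b^{(2)}+d\,a^{(2)}+c^{(2)})$, exactly the simulator's rule. So for every fixed $(x,y)$ the simulated and real views have the same distribution, which is the claim.

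The main obstacle --- really the only delicate point --- is the joint-independence bookkeeping in the \textbf{S2} argument: one must verify that $(x^{(2)},y^{(2)},a^{(2)},b^{(2)},c^{(2)},e,d)$ are \emph{jointly}, not merely marginally, uniform and independent of the secrets, which reduces to the mutual independence of the worker's splitting randomness and of the underlying triple-share random variables; and one must check that revealing $e$ and $d$ together with the $c$-share does not leak the product $ab$ --- it does not, precisely because $c^{(2)}$ re-randomizes the single product term $ab$ that appears in $\langle xy\rangle^{(2)}$. I would therefore state the ring and the uniformity assumptions on the Beaver triple explicitly at the outset, after which all remaining steps are routine ring arithmetic.
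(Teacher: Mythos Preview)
Your argument is correct: the protocol description matches the classical Beaver multiplication (specialized so that only \textbf{S2} reconstructs), the correctness expansion is right, and both privacy arguments are sound. In particular, your key observation---that for each fixed $(x,y)$ the map from the seven independent uniform sources $(x^{(1)},y^{(1)},a,b,a^{(1)},b^{(1)},c^{(1)})$ to the seven values in \textbf{S2}'s view $(x^{(2)},y^{(2)},a^{(2)},b^{(2)},c^{(2)},e,d)$ is a bijection---cleanly handles the joint-independence bookkeeping you flag as the delicate point, and the last message $\langle xy\rangle^{(1)}$ is then determined by $xy$ together with that tuple, so the simulator is exact.

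The paper, by contrast, does not prove this lemma at all: it simply states it and defers to Appendix~\ref{sec:beaver_s_mpc_protocol}, which presents Beaver's protocol (\Cref{protocol:beaver}), checks correctness by the same algebraic identity you use, and leaves security to the citation of~\cite{beaver1991efficient}. So your route is genuinely different in that you supply a self-contained information-theoretic simulation proof where the paper treats the lemma as a black-box classical fact. What your approach buys is rigor and completeness; what the paper's approach buys is brevity, which is reasonable given the result is standard. One small remark: you work over a ``finite ring,'' but the paper's secret-sharing (Appendix~\ref{ssec:ss}) is over $\mathbb{R}\bmod 2B$ with the continuous uniform distribution---your argument goes through unchanged there, since all you use is translation-invariance of the uniform law on a compact abelian group.
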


Due to the page limit, we put the details about Beaver's triples, multiplying secret shares, as well as the proofs for the next two theorems to the Appendix.
\begin{restatable}[Privacy for \textbf{S1}]{theorem}{theoremsOne}\label{theorem:1}
	Let $\zz=\sum_{i=1}^n p_i \xx_i$ where $\{p_i\}_{i=1}^n$ is the output of byzantine oracle or a vetor of 1s (non-private).
	Let $BV_{ij}=\langle \aa_{ij}, \bb_{ij}, \cc_{ij} \rangle$ and $BVp_i=\langle \aa_i^p, \bb_i^p, \cc_i^p \rangle$ be the Beaver's triple used in the multiplications.
	Let $\langle\cdot\rangle^{(1)}$ be the share of the secret-shared values $\langle\cdot\rangle$ on \textbf{S1}. Then for all workers $i$
	\begin{equation*}
		\begin{split}
			\mathbb{P}(\xx_i=x_i &\ |\
			\{
			\langle\xx_i \rangle^{(1)},
			\langle p_i \rangle^{(1)}
			\}_{i=1}^n,
			~\{
			BV_{i,j}^{(1)},
			\xx_i - \xx_j - \aa_{ij}, \xx_i - \xx_j - \bb_{ij}
			\}_{i<j}, \\
			&~~\{
			\langle \| \xx_i - \xx_j \|^2 \rangle^{(1)}
			\}_{i<j},
			~\{
			BVp_i^{(1)},
			p_i - \aa_i^p, p_i - \bb_i^p
			\}_{i=1}^n,
			\zz
			)
			=\mathbb{P}(\xx_i=x_i | \zz)
		\end{split}
	\end{equation*}
	Note that the conditioned values are what \textbf{S1} observes throughout the algorithm. $\{
		BV_{ij}^{(1)},
		\xx_i - \xx_j - \aa_{ij}, \xx_i - \xx_j - \bb_{ij}
		\}_{i<j}$ and $\{
		BVp_i^{(1)},
		p_i - \aa_i^p, p_i - \bb_i^p
		\}_{i=1}^n$ are intermediate values during shared values multiplication.
\end{restatable}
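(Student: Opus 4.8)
I would prove the (stronger) statement that \textbf{S1}'s entire transcript is, up to deterministic post‑processing, a pair $(A,\zz)$ in which $A$ is independent of $\{\xx_i\}_i$ \emph{jointly with} $\zz$; the theorem then drops out of Bayes' rule. Here $A$ bundles together \emph{all} the shares \textbf{S1} ever holds ($\langle\xx_i\rangle^{(1)}$, $\langle p_i\rangle^{(1)}$, $BV_{ij}^{(1)}$, $BVp_i^{(1)}$) and \emph{all} the masked values publicly opened during the multiplications ($\xx_i-\xx_j-\aa_{ij}$, $\xx_i-\xx_j-\bb_{ij}$, $p_i-\aa_i^p$, $p_i-\bb_i^p$). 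I would work over a finite field $\mathbb{F}$, so that ``uniformly random share'' is meaningful; the real‑valued description with a ``large noise $\xi_i$'' is recovered in the usual limiting sense, which I would state as a remark.

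\emph{Step 1 --- the view is a function of $(A,\zz)$.} By the Beaver‑triple lemma, in each secure multiplication \textbf{S1}'s output share is a fixed deterministic function of its own triple shares and of the opened masked values. Hence the distance shares $\langle\|\xx_i-\xx_j\|^2\rangle^{(1)}$, the product shares $\langle p_i\xx_i\rangle^{(1)}$, the aggregate share $\langle\zz\rangle^{(1)}$, and the single message \textbf{S2} sends \textbf{S1} in the last step (which equals $\zz-\langle\zz\rangle^{(1)}$) are all deterministic functions of $(A,\zz)$; likewise the raw per‑multiplication messages \textbf{S1} receives are recoverable from the opened values it already observes. The conditioning set in the theorem contains every coordinate of $A$, contains $\zz$, and contains only one further element ($\langle\|\xx_i-\xx_j\|^2\rangle^{(1)}$, which is $\sigma(A)$‑measurable), so the $\sigma$‑algebra it generates equals $\sigma(A,\zz)$. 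It therefore suffices to show $A\perp\{\xx_i\}_i\mid\zz$.

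\emph{Step 2 --- conditional law of $A$, and conclusion.} Let $\rho$ denote all internal randomness used by the robust oracle \textbf{Aggr}, and $M$ all masking randomness (the workers' sharing randomness, the precomputed Beaver triples, and \textbf{S2}'s fresh randomness when re‑sharing $\pp$); these sources are mutually independent and independent of $\{\xx_i\}_i$. I claim that \emph{conditioned on $(\{\xx_i\}_i,\rho)$, the coordinates of $A$ are mutually independent and uniform (over $\mathbb{F}$ of the appropriate dimension)}: each $\langle\xx_i\rangle^{(1)}$, $\langle p_i\rangle^{(1)}$ and each component of $BV_{ij}^{(1)},BVp_i^{(1)}$ is a fresh uniform draw; each opened value is a $(\{\xx_i\}_i,\rho)$‑measurable quantity offset by a fresh uniform triple component, hence uniform given everything already listed; and a pair such as $(\xx_i-\xx_j-\aa_{ij},\,\xx_i-\xx_j-\bb_{ij})$ is the image of $(\aa_{ij},\bb_{ij})$ under a bijection for fixed $\xx_i-\xx_j$, hence uniform on $\mathbb{F}^{2d}$. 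Since this conditional law does not depend on $(\{\xx_i\}_i,\rho)$, we get $A\perp(\{\xx_i\}_i,\rho)$, and since $\zz=\sum_i p_i\xx_i$ is a deterministic function of $(\{\xx_i\}_i,\rho)$ this upgrades to $A\perp(\{\xx_i\}_i,\zz)$, i.e.\ $\mathbb{P}(A=a\mid\{\xx_i\}_i=x,\zz=z)=\mathbb{P}(A=a)$ whenever the event has positive probability. Bayes' rule then gives
\[
\mathbb{P}(\{\xx_i\}_i=x\mid A=a,\zz=z)=\frac{\mathbb{P}(A=a)\,\mathbb{P}(\{\xx_i\}_i=x\mid\zz=z)}{\mathbb{P}(A=a)}=\mathbb{P}(\{\xx_i\}_i=x\mid\zz=z),
\]
and, using $\sigma(V)=\sigma(A,\zz)$ from Step 1 and marginalizing out $\{\xx_j\}_{j\neq i}$, we obtain $\mathbb{P}(\xx_i=x_i\mid V)=\mathbb{P}(\xx_i=x_i\mid\zz)$.

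\emph{Main obstacle.} The genuinely delicate point is the independence of the masking randomness $M$ from the oracle randomness $\rho$ used in Step 2: it is exactly what prevents conditioning on $\zz$ from ``explaining away'' and thereby coupling $A$ back to the inputs. Establishing the ``independent, uniform given $(\{\xx_i\}_i,\rho)$'' claim itself is a careful but routine induction over the $\binom{n}{2}+n$ multiplications, each invoking the Beaver‑triple lemma with its own independent triple; everything else is bookkeeping of which opened value belongs to which multiplication.
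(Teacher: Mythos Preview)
Your argument is correct and rests on the same mechanism as the paper's proof---namely, that every quantity in \textbf{S1}'s view is either deterministically computable from other conditioned values (the distance shares $\langle\|\xx_i-\xx_j\|^2\rangle^{(1)}$) or is masked by a fresh uniform variable drawn independently of the secrets (the input shares, the triple shares, and the opened Beaver values). The organization, however, differs: the paper proceeds by \emph{sequentially peeling off} conditioning variables one family at a time, at each step citing an independence to drop a block until only $\{\langle\xx_i\rangle^{(1)}\}_i$ and $\zz$ remain, and then dropping the shares. You instead bundle the entire transcript into a single $A$, establish in one stroke that $A$ is conditionally uniform given $(\{\xx_i\}_i,\rho)$ (hence $A\perp(\{\xx_i\}_i,\zz)$), and finish with an explicit Bayes computation. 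Your route is more rigorous on two points the paper leaves implicit: you isolate the oracle randomness $\rho$ and argue carefully that $\zz$ being $(\{\xx_i\}_i,\rho)$-measurable is what lets the joint independence survive conditioning on $\zz$, and you make the ``uniform share'' claim precise by working over a finite field. The paper's step-by-step removal is shorter to write but relies on informal independence appeals at each stage; your packaging into $(A,\zz)$ is cleaner to verify and would transfer more directly to a simulation-based security proof.
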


For \textbf{S2}, the theorem to prove is a bit different because in this case \textbf{S2} doesn't know the output of aggregation $\zz$.
In fact, this is more similar to an independent system which knows little about the underlying tasks, model weights, etc.
We show that while \textbf{S2} has observed many intermediate values, it can only learn no more than what can be inferred from model distances.
\begin{restatable}[Privacy for \textbf{S2}]{theorem}{theoremsTwo}\label{theorem:2}
	Let $\{p_i\}_{i=1}^n$ is the output of byzantine oracle or a vetor of 1s (non-private).
	Let $BV_{ij}=\langle \aa_{ij}, \bb_{ij}, \cc_{ij} \rangle$ and $BVp_i=\langle \aa_i^p, \bb_i^p, \cc_i^p \rangle$ be the Beaver's triple used in the multiplications.
	Let $\langle\cdot\rangle^{(2)}$ be the share of the secret-shared values $\langle\cdot\rangle$ on \textbf{S2}. Then for all workers $i$
	\begin{equation}
		\begin{split}
			\mathbb{P}&(\xx_i=x_i \ |\
			\{
			\langle\xx_i \rangle^{(2)},
			\langle p_i \rangle^{(2)}, p_i
			\}_{i=1}^n,
			~\{
			BV_{i,j}^{(2)},
			\xx_i - \xx_j - \aa_{ij}, \xx_i - \xx_j - \bb_{ij}
			\}_{i<j}, \\
			&~~\{
			\langle \| \xx_i - \xx_j \|^2 \rangle^{(2)},
			\| \xx_i - \xx_j \|^2
			\}_{i<j},
			~\{
			BVp_i^{(2)},
			p_i - \aa_i^p, p_i - \bb_i^p
			\}_{i=1}^n
			) \\
			&=\mathbb{P}(\xx_i=x_i \ |\  \{\| \xx_i - \xx_j \|^2\}_{i < j})
		\end{split}
	\end{equation}
	Note that the conditioned values are what \textbf{S2} observed throughout the algorithm. $\{
		BV_{ij}^{(2)},
		\xx_i - \xx_j - \aa_{ij}, \xx_i - \xx_j - \bb_{ij}
		\}_{i<j}$ and $\{
		BVp_i^{(2)},
		p_i - \aa_i^p, p_i - \bb_i^p
		\}_{i=1}^n$ are intermediate values during shared values multiplication.
\end{restatable}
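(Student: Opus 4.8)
The plan is to prove \Cref{theorem:2} by the standard information-theoretic ``simulation'' argument for MPC security: we show that the entire collection of values observed by \textbf{S2} can be produced from the list of pairwise squared distances $D=\{\|\xx_i-\xx_j\|^2\}_{i<j}$ together with randomness that is independent of the secrets $\{\xx_i\}_i$. Equivalently, we show that the conditional law of \textbf{S2}'s view given $\xx=x$ depends on $x$ only through $D(x)$; by Bayes' rule this is exactly the asserted identity $\mathbb{P}(\xx_i=x_i\mid \mathrm{view})=\mathbb{P}(\xx_i=x_i\mid D)$.

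First I would classify the conditioned quantities into four groups. (i) The additive shares held by \textbf{S2}: $\langle\xx_i\rangle^{(2)}$, $\langle p_i\rangle^{(2)}$, $\langle\|\xx_i-\xx_j\|^2\rangle^{(2)}$, and the \textbf{S2}-parts $BV^{(2)}_{ij}$, $BVp^{(2)}_i$ of the Beaver triples. By the secret-sharing convention of the Appendix, each such share is marginally uniform and independent of the value it shares; the triple-shares are moreover precomputed and data-independent. (ii) The blinded values revealed during the distance computations, $\xx_i-\xx_j-\aa_{ij}$ and $\xx_i-\xx_j-\bb_{ij}$ (and the analogous blinded quantities $p_i-\aa^p_i$ and the $\xx_i$-blinding term in \textbf{AggregationAndUpdate}): here the \emph{full} masks $\aa_{ij},\bb_{ij}$ are recovered by neither server alone, and in particular \textbf{S2} is missing $\aa^{(1)}_{ij},\bb^{(1)}_{ij}$, which are uniform and independent of everything else. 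Hence, conditioned on \textbf{S2}'s view, each blinded value is a fresh one-time pad of $\xx_i-\xx_j$ and is uniform and independent of $\xx_i-\xx_j$. (iii) Derived quantities: by the Beaver's triples lemma stated above, \textbf{S2}'s output share $\langle\|\xx_i-\xx_j\|^2\rangle^{(2)}$ of each multiplication is a deterministic function of the items in (i)--(ii) that \textbf{S2} already holds, and $\langle p_i\rangle^{(2)}$ is determined by $p_i$ and \textbf{S2}'s own sharing randomness, so these add nothing. (iv) The genuinely revealed outputs $\|\xx_i-\xx_j\|^2$ and $p_i=\textbf{Aggr}(\{\|\xx_i-\xx_j\|^2\}_{i<j})$: the former is exactly $D$, and the latter is a function of $D$ (with fresh, data-independent randomness if \textbf{Aggr} is randomized), so conditioning on them is the same as conditioning on $D$.

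Then I would carry out the ``peeling'': reveal \textbf{S2}'s observations one at a time, and at each step invoke the appropriate item above to argue that the newly revealed variable is, conditionally on the previously revealed ones and on $D$, independent of $\xx$ --- either because it is precomputed data-independent randomness, or a data-independent uniform blinding term, or a deterministic function of already-revealed quantities, or equal to a function of $D$ itself. Chaining these conditional independencies gives $\mathrm{view}\perp \xx \mid D$, which is the claim; the security of each individual secure multiplication used here is precisely the Beaver's triples lemma, applied in the direction ``\textbf{S2}'s transcript of the multiplication is simulatable from its output alone.''

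The main obstacle is the correlation structure created by the fact that \textbf{S2} sees simultaneously (a) the blinded differences $\xx_i-\xx_j-\aa_{ij}$ for \emph{all} pairs, (b) its own shares $\xx_i^{(2)}$ of \emph{all} inputs, and (c) the true squared distances --- while the same $\xx_i$ recurs across many pairs. One must check that these cannot be combined to leak directional (i.e.\ beyond-distance) information about any $\xx_i$. This reduces to verifying that the masks $\{\aa_{ij},\bb_{ij}\}_{i<j}$ are \emph{jointly} independent and uniform from \textbf{S2}'s viewpoint --- which holds because each multiplication uses an independent precomputed triple and \textbf{S2} never holds the \textbf{S1}-shares $\aa^{(1)}_{ij},\bb^{(1)}_{ij}$ --- so that the blinded differences are jointly uniform and jointly independent of $\{\xx_i\}_i$, leaving the revealed distance list as the \emph{only} $\xx$-dependent information in \textbf{S2}'s view. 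A secondary, routine point is that the additive masking uses a ``large noise'' term rather than an exactly uniform field element, so strictly speaking one either works in the finite-field model underlying the secret-sharing convention of the Appendix (where the identity is exact) or obtains the statement up to a negligible statistical distance.
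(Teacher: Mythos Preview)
Your proposal is correct and follows essentially the same route as the paper: both arguments ``peel off'' the conditioning variables one at a time, dropping the Beaver-triple shares and masked values as data-independent randomness, dropping $\langle\|\xx_i-\xx_j\|^2\rangle^{(2)}$ and $\langle p_i\rangle^{(2)}$ as deterministic functions of already-conditioned quantities, and dropping $\{p_i\}$ as a function of the distance list, leaving only $\{\|\xx_i-\xx_j\|^2\}_{i<j}$. The paper's proof is terser (largely deferring to the proof of \Cref{theorem:1}), whereas you frame the same reduction in the standard simulation language and are explicitly careful about two points the paper glosses over---the \emph{joint} independence of the masks $\{\aa_{ij},\bb_{ij}\}_{i<j}$ across all pairs (needed because the same $\xx_i$ appears in many blinded differences) and the finite-field versus real-valued caveat---but these are refinements of the same argument rather than a different approach.
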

The model distances indeed only leaks similarity among the workers. Such similarity, however, does not tell \textbf{S2} information about the parameters; in \citep{mhamdi2018hidden} the \textit{leeway attack} attacks distance based-rules because they don't distinguish two gradients with evenly distributed noise and two different gradients very different in one parameter. This means the leaked information has low impact to the privacy.

It is also worth noting that curious workers can only inspect others' values by learning from the public model/update. This is because in our scheme, workers don't interact directly and there is only one round of communication between servers and workers. So the only message a worker receives is the public model update.


\subsection{Combining with differential privacy} 
\label{sub:differential_privacy_and_local_differential_privacy}
While input privacy is our main goal, our approach is naturally compatible with other orthogonal notions of privacy.
Global differential privacy (DP) \citep{shokri2015privacy,abadi2016deep,chase2017private} is mainly concerned about the privacy of the \emph{aggregated} model, and whether it leaks information about the training data. On the other hand, local differential privacy (LDP) \citep{evfimievski2003limiting,kasiviswanathan2011can} is stronger notions which is also concerned with the training process itself. It requires that every communication transmitted by the worker does not leak information about their data. In general, it is hard to learn deep learning models satisfying LDP using iterate perturbation (which is the standard mechanism for DP) \citep{bonawitz2017practical}.

Our non-robust protocol \emph{is naturally compatible} with local differential privacy. Consider the usual iterative optimization algorithm which in each round $t$ performs\vspace{-1mm}
\begin{equation}\label{eq:opt-update}\textstyle
	\pp_t \leftarrow \pp_{t-1} - \eta (\xx_t + \nu_t)\,, \text{ where } \xx_t = \frac{1}{n}\sum_{i=1}^n \xx_{t,i}\,.
\end{equation}
Here $\xx_t$ is the aggregate update, $\pp_t$ is the model parameters, and $\nu_t$ is the noise added for DP \citep{abadi2016deep}.
\begin{restatable}[from DP to LDP]{theorem}{theoremsThree} Suppose that the noise $\nu_t$ in \eqref{eq:opt-update} is sufficient to ensure that the set of model parameters $\{\pp_t\}_{t\in[T]}$ satisfy $(\e,\delta)$-DP for $\e \geq 1$. Then, running \eqref{eq:opt-update} with using Alg.~\ref{protocol:two_server:nonrobust} to compute $(\xx_t + \eta_t)$ by securely aggregating $\{\xx_{1, t} + n\eta_t, \xx_{2, t}, \dots, \xx_{n, t}\}$ satisfies $(\e,\delta)$-LDP.
\end{restatable}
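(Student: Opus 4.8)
The plan is to obtain $(\e,\delta)$-LDP by a post-processing argument: I will show that after running the protocol the complete view of any single honest-but-curious party depends on a given worker's local data only through the public iterates $\{\pp_t\}_{t\in[T]}$, which by hypothesis already satisfy $(\e,\delta)$-DP. To make this precise I first fix the adversary model implicit in the claim. An adversary is a single honest-but-curious participant --- \textbf{S1}, \textbf{S2}, or some worker $j$ --- and $(\e,\delta)$-LDP means that for every worker $i$ the map from $\xx_i$ (equivalently, from worker $i$'s local dataset, under the same worker-level neighboring relation used in the DP hypothesis) to that adversary's entire transcript is $(\e,\delta)$-DP. It therefore suffices to exhibit, for each adversary and each $i$, a representation of its view as $g(\{\pp_t\}_{t\in[T]};R)$ with $g$ a fixed randomized map and $R$ independent of $\xx_i$.

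Next I would carry out this reduction using \Cref{protocol:two_server:nonrobust} together with the information-theoretic properties of additive secret sharing already invoked in the non-robust privacy discussion (and the Beaver's-triples lemma / \Cref{theorem:1}-style bookkeeping). A worker $j\neq i$ receives only the public update $\zz_t=\sum_k \xx_{k,t}+n\nu_t$ in each round --- there is a single round of worker--server communication and workers never talk to each other --- so its view is trivially a function of $\{\pp_t\}$ and its own independent data. Server \textbf{S2} sees only the shares $\{\xx_{k,t}^{(2)}\}_{k,t}$, which are i.i.d.\ uniform masks and hence independent of all $\xx_k$, so \textbf{S2} learns nothing at all. Server \textbf{S1} is the substantive case: it sees $\{\xx_{k,t}^{(1)}\}_{k,t}$ and the \emph{summed} second shares $\{\sum_k\xx_{k,t}^{(2)}\}_t$, hence equivalently $\{\xx_{k,t}^{(1)}\}_{k,t}$ together with the aggregates $\{\zz_t\}_t$. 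Conditioned on all worker data and on worker $1$'s noise, each first share $\xx_{k,t}^{(1)}$ equals $\xx_{k,t}+\xi_{k,t}$ (plus an extra $n\nu_t$ when $k=1$), i.e.\ a deterministic shift of the uniform mask $\xi_{k,t}$, hence is uniformly distributed and independent of the data; so the only data-dependent part of \textbf{S1}'s view is $\{\zz_t\}_t$, a deterministic function of $\{\pp_t\}$ given $\pp_0,\eta$ and conversely. Thus \textbf{S1}'s view is also of the form $g(\{\pp_t\};R)$ with $R$ independent of $\xx_i$.

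The conclusion is then immediate. By exactness of the non-robust protocol, $\tfrac1n\zz_t=\tfrac1n\sum_k\xx_{k,t}+\nu_t=\xx_t+\nu_t$, so the iterates $\{\pp_t\}$ produced when \Cref{protocol:two_server:nonrobust} is used to compute $\xx_t+\nu_t$ are distributed exactly as in \eqref{eq:opt-update} and hence, by assumption, satisfy $(\e,\delta)$-DP under the worker-level neighbor relation. Since adjoining data-independent randomness and applying a fixed (randomized) map both preserve $(\e,\delta)$-DP, every adversary's view is $(\e,\delta)$-DP as a function of $\xx_i$ for every $i$, which is precisely $(\e,\delta)$-LDP. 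I would also track where the hypothesis $\e\ge1$ is actually consumed --- I expect it to enter only the quantitative DP calibration underlying the assumption (relating the noise sized for the averaged update $\xx_t$ to the raw sum $\sum_k\xx_{k,t}$ that is seen through the aggregate) --- and flag it rather than silently drop it.

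The main obstacle is the \textbf{S1} step: rigorously justifying that the individual first shares $\xx_{k,t}^{(1)}$ stay perfectly masking \emph{jointly with} the revealed aggregates, i.e.\ that conditioning on $\{\zz_t\}$ (and hence on $\{\pp_t\}$) does not consume the mask $\xi_{i,t}$ that hides $\xx_{i,t}$ in $\xx_{i,t}^{(1)}$. This needs the masks to live in a sufficiently large group, or to be treated modularly as in the Appendix secret-sharing construction, and it crucially uses that \textbf{S1} only ever receives the \emph{sum} $\sum_k\xx_{k,t}^{(2)}$ and never the individual second shares, so $\xi_{i,t}$ retains full entropy given \textbf{S1}'s whole view. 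A secondary but important point to state up front is the adversary model: the reduction fails if the noise-injecting worker $1$ is permitted to collude with \textbf{S1} (together they could subtract $n\nu_t$ off the aggregate and destroy the DP of the remaining workers), so the theorem must be read for non-colluding honest-but-curious parties, in line with the paper's security model.
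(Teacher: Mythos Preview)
Your proposal is correct and follows essentially the same route as the paper: enumerate what each party (\textbf{S1}, \textbf{S2}, any worker) observes, use the uniformity of additive secret shares to argue that everything except the aggregate $\zz_t$ (equivalently $\{\pp_t\}$) is independent of the data, and then inherit $(\e,\delta)$-DP from the hypothesis on the iterates. The only cosmetic difference is that you package the last step as a post-processing/randomized-map argument, whereas the paper verifies the DP ratio directly on each component of the view; your framing is a bit cleaner, and your explicit caveats about joint masking at \textbf{S1} and about non-collusion with the noise-injecting worker are points the paper leaves implicit.
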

Unlike existing approaches, we do not face a tension between differential privacy which relies on real-valued vectors and cryptographic tools which operate solely on discrete/quantized objects.
This is because our protocols do not rely on any cryptographic primitives, in contrast to e.g. \citep{bonawitz2017practical}. In particular, the vectors~$\xx_i$ can be full-precision (real-valued), and do not need to be quantized. Thus, our secure aggregation protocol can be integrated with a mechanism which has global DP properties e.g. \citep{abadi2016deep}, and prove \emph{local} DP guarantees for the resulting mechanism.

\section{Empirical analysis of overhead}
\begin{wrapfigure}{r}{0.5\textwidth}\vspace{-1.5em}
	\centering
	\includegraphics[width=0.5\textwidth]{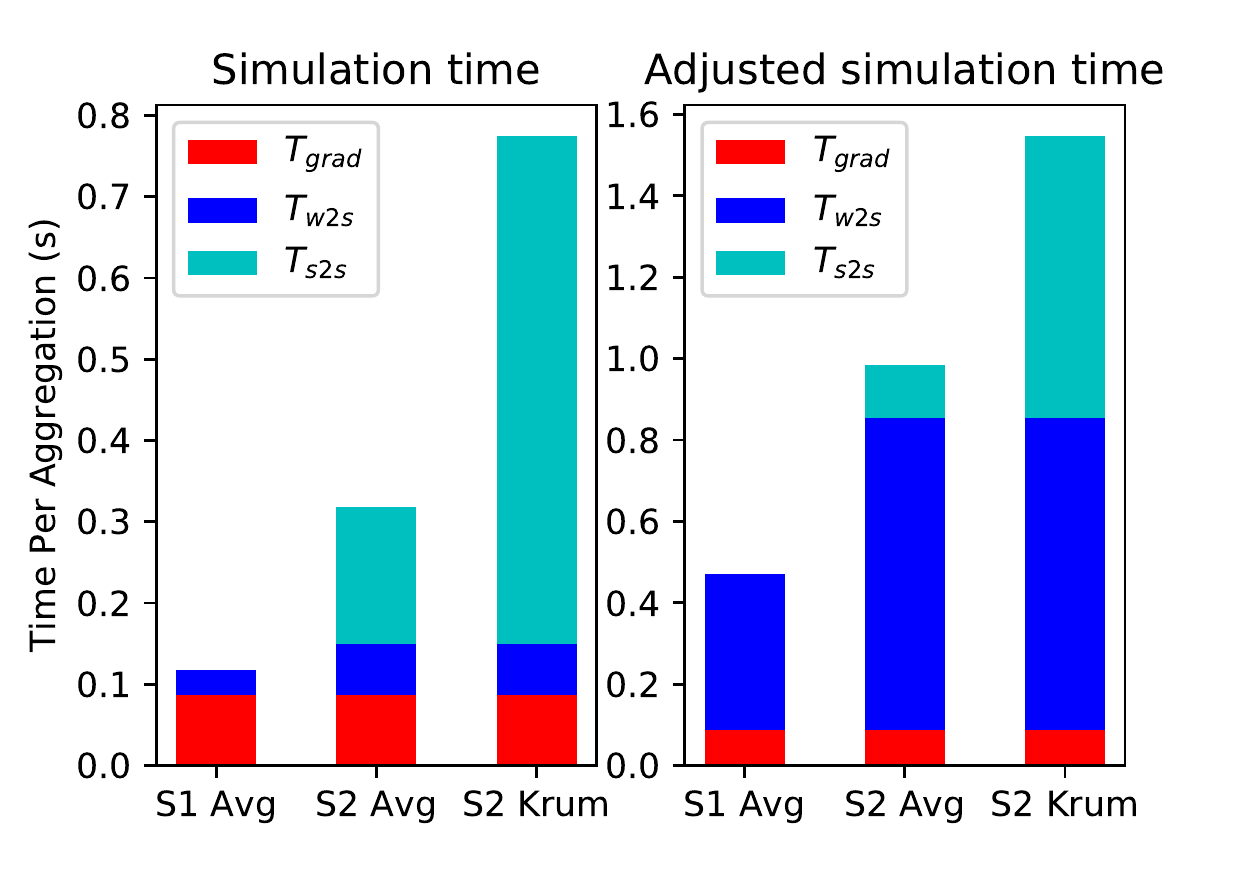}
	\caption{Left: Actual time spent; Right: Time adjusted for network bandwidth.}\vspace{-3mm}
	\label{fig:performance}
\end{wrapfigure}

We present an illustrative simulation on a local machine (i7-8565U) to demonstrate the overhead of our scheme. We use PyTorch with MPI to train a neural network of 1.2 million parameters on the MNIST dataset.
We compare the following three settings: simple aggregation with 1 server, secure aggregation with 2 servers, robust secure aggregation with 2 servers (with Krum~\citep{blanchard2017machine}). The number of workers is always 5.

\Cref{fig:performance} shows the time spent on all parts of training for one aggregation step.
$T_{grad}$ is the time spent on batch gradient computation; $T_{w2s}$ refers to the time spend on uploading and downloading gradients; $T_{s2s}$ is the time spend on communication between servers.
Note that the server-to-server communication could be further reduced by employing more efficient aggregationn rules.
Since the simulation is run on a local machine, time spent on communication is underestimated. In the right hand side figure, we adjusts time by assuming the worker-to-server link has 100Mbps bandwidth and 1Gbps respectively for the server-to-server link.
Even in this scenario, we can see that the overhead from private aggregation is small. Furthermore, the additional overhead by the robustness module is moderate comparing to the standard training, even for realistic deep-learning settings.
For comparison, a zero-knowledge-proof-based approach need to spend 0.03 seconds to encode a submission of 100 integers
\citep{corrigan2017prio}.

\section{Literature review}\vspace{-2mm}


\textbf{Secure Aggregation.} 
In the standard distributed setting with 1 server, \citet{bonawitz2017practical} proposes a secure aggregation rule which is also fault tolerant. They generate a shared secret key for each pair of users. The secret keys are used to construct masks to the input gradients so that masks cancel each other after aggregation. To achieve fault tolerance, they employ Shamir's secret sharing. To deal with active adversaries, they use a public key infrastructure (PKI) as well as a second mask applied to the input. A followup work \citep{mandal2018nike} minimizes the pairwise communication by outsourcing the key generation to two non-colluding cryptographic secret providers.
However, both protocols are still not scalable because each worker needs to compute a shared-secret key and a noise mask for every other client. When recovering from failures, all live clients are notified and send their masks to the server, which introduces significant communication overhead.
In contrast, workers in our scheme are freed from coordinating with other workers, which leads to a more scalable system.

\textbf{Byzantine-Robust Aggregation/SGD.} 
\citet{blanchard2017machine} first proposes Krum and Multi-Krum for training machine learning models in the presence of Byzantine workers. \citet{mhamdi2018hidden} proposes a general enhancement recipe termed \textit{Bulyan}. \citet{alistarh2018byzantine} proves a robust SGD training scheme with optimal sample complexity and the number of SGD computations. \citet{muozgonzlez2019byzantinerobust} uses HMM to detect and exclude Byzantine workers for federated learning. \citet{yin2018byzantinerobust} proposes median and trimmed-mean -based robust algorithms which achieve optimal statistical performance. For robust learning on non-i.i.d dataset only appear recently \citep{li2019rsa,ghosh2019robust,he2020byzantinerobust}.
Further, \citet{xie2018phocas} generalizes the Byzantine attacks to manipulate data transfer between workers and server and \citet{xie2018zeno} extends it to tolerate an arbitrary number of Byzantine workers.

\citet{pillutla2019robust} proposes a robust aggregation rule RFA which is also privacy preserving. However, it is only robust to data poisioning attack as it requires workers to compute aggregation weights according to the protocol.
\citet{corrigan2017prio} proposes a private and robust aggregation system based on secret-shared non-interactive proof (SNIP).
Despite the similarities between our setups, the generation of a SNIP proof on client is expansive and grows with the dimensions. Besides, this paper offers limited robustness as it only validates the range of the data.

\textbf{Inference As A Service.} 
An orthogonal line of work is inference as a service or oblivious inference. A user encrypts its own data and uploads it to the server for inference. \citep{gilad2016cryptonets,rouhani2017deepsecure,hesamifard2017cryptodl,liu2017oblivious,mohassel2017secureml,chou2018faster,juvekar2018gazelle,riazi2019xonn} falls into a general category of 2-party computation (2PC). A number of issues have to be taken into account: the non-linear activations should be replaced with MPC-friendly activations, represent the floating number as integers. \cite{ryffel2019partially} uses functional encryption on polynomial networks. \cite{gilad2016cryptonets} also have to adapt activations to polynomial activations and max pooling to scaled mean pooling.

\textbf{Server-Aided MPC.} 
One common setting for training machine learning model with MPC is the server-aided case \citep{mohassel2017secureml,chen2019secure}. In previous works, both the model weights and the data are stored in shared values, which in turn makes the inference process computationally very costly.
Another issue is that only a limited number of operations (function evaluations) are supported by shared values. Therefore, approximating non-linear activation functions 
again introduces significant overhead. In our paper, the computation of gradients are local to the workers, only output gradients are sent to the servers. Thus no adaptations of the worker's neural network architectures for MPC are required.

\section{Conclusion}

In this paper, we propose a novel secure and Byzantine-robust aggregation framework. To our knowledge, this is the first work to address these two key properties jointly. Our algorithm is simple and fault tolerant and scales well with the number of workers.
In addition, our framework holds for any existing distance-based robust rule.
Besides, the communication overhead of our algorithm is roughly bounded by a factor of 2 and the computation overhead, as shown in \Cref{protocol:beaver}, is marginal and can even be computed prior to training.

\bibliographystyle{plain}
\bibliography{papers}


\newpage
\onecolumn
\appendix


\part*{Appendix}

\section{Proofs}\label{sec:proofs}
\theoremsOne*
\begin{proof}
	First, we use the independence of Beaver's triple to simplify the conditioned term.
	\begin{itemize}[noitemsep,nolistsep]
		\item The Beaver's triples are data-independent. Since $\langle\aa_i^p\rangle^{(2)}$ and $\langle\bb_i^p\rangle^{(2)}$ only exist in $\{p_i - \aa_i^p, p_i-\bb_i^p\}_i$ and they are independent of all other variables, we can remove $\{p_i - \aa_i^p, p_i-\bb_i^p\}_i$ from conditioned terms.

		\item For the same reason $\{BVp_i^{(1)}\}_{i=1}^n$ are independent of all other variables and can be removed.

		\item The secret shares of aggregation weights $\langle p_i \rangle^{(1)} := (p_i + \eta_i) / 2$ and $\langle p_i \rangle^{(2)} := (p_i - \eta_i) / 2$ where  $\eta_i$ is random noise. Then $\{\langle p_i \rangle^{(1)} \}_i$ are independent of all other variables. Thus it can be removed.
	\end{itemize}
	Now the left hand side (LHS) can be simplified as
	\begin{equation}
		\begin{split}
			LHS =& \mathbb{P}(\xx_i=x_i |
			\{
			\langle\xx_i \rangle^{(1)}
			\}_{i=1}^n, \\
			&~~\{
			BV_{i,j}^{(1)},
			\xx_i - \xx_j - \aa_{ij}, \xx_i - \xx_j - \bb_{ij}, \\
			&~~~\langle \| \xx_i - \xx_j \|^2 \rangle^{(1)}
			\}_{i<j}, \zz)
		\end{split}
	\end{equation}
	There are other independence properties:
	\begin{itemize}[noitemsep,nolistsep]
		\item The secret shares of the input $\langle\xx_i\rangle$ can be seen as generated by random noise $\xi_i$. Thus $\langle \xx_i \rangle^{(1)}:= (\xi_i + \xx_i)/2$ and $\langle \xx_i \rangle^{(2)}:= (-\xi_i + \xx_i)/2$ are independent of others like $\xx_i$. Besides, for all $j\neq i$,
		      $\langle \xx_i \rangle^{(\cdot)}$ and $\langle \xx_j \rangle^{(\cdot)}$ are independent.
		\item Beaver's triple $\{BV_{i,j}^{(1)}\}_{i < j}$ and $\{BV_{i,j}^{(2)}\}_{i < j}$ are clearly independent. Since they are generated before the existance of data, they are always independent of $\{\xx_j^{(\cdot)}\}_j$.
	\end{itemize}

	Next, according to Beaver's multiplication \Cref{protocol:beaver},
	$$
		\langle \| \xx_i - \xx_j \|^2 \rangle^{(1)} = \cc_{ij}^{(1)} +
		(\xx_i - \xx_j - \aa_{ij}) \bb_{ij}^{(1)} + (\xx_i - \xx_j - \bb_{ij}) \aa_{ij}^{(1)}
	$$
	we can remove this term from condition:
	\begin{equation}
		\begin{split}
			LHS &= \mathbb{P}(\xx_i=x_i |
			\{
			\langle\xx_i \rangle^{(1)}
			\}_{i=1}^n, \zz, \\
			&\{
			BV_{i,j}^{(1)},
			\xx_i - \xx_j - \aa_{ij}, \xx_i - \xx_j - \bb_{ij}\}_{i<j})
		\end{split}
	\end{equation}
	By the independence between $\langle \xx_i \rangle^{(\cdot)}$ and $BV_{ij}^{(\cdot)}$, we can further simplify the conditioned term
	\begin{equation}
		\begin{split}
			LHS &= \mathbb{P}(\xx_i=x_i |
			\{
			\langle\xx_i \rangle^{(1)}
			\}_{i=1}^n, \zz, \\
			&\{
			BV_{i,j}^{(1)},
			\langle\xx_i - \xx_j - \aa_{ij}\rangle^{(2)},
			\langle\xx_i - \xx_j - \bb_{ij}\rangle^{(2)} \}_{i<j})
		\end{split}
	\end{equation}
	Since  $BV_{ij}^{(1)}$ and $BV_{ij}^{(2)}$ are always independent of all other variables, we know that
	\begin{equation}\label{eq:privacy_proof:no_BV}
		LHS = \mathbb{P}(\xx_i=x_i |
		\{
		\langle\xx_i \rangle^{(1)}
		\}_{i=1}^n, \zz)
	\end{equation}
	For worker $i$, $\forall j \neq i$, $\langle\xx_i\rangle^{(\cdot)}$ and $\langle\xx_j\rangle^{(1)}$ are independent
	\begin{equation*}
		LHS = \mathbb{P}(\xx_i=x_i | \zz ).
		\vspace{-2em}
	\end{equation*}
\end{proof}

\theoremsTwo*

\begin{proof} Similar to the proof of \Cref{theorem:1}, we can first conclude

	\begin{itemize}[noitemsep,nolistsep]
		\item $\{p_i - \aa_i^p, p_i-\bb_i^p\}_i$ and $\{BVp_i^{(2)}\}_{i=1}^n$ could be dropped because these they are data independent and  no other terms depend on them.
		\item $\{\langle p_i \rangle^{(2)}\}_{i=1}^n$ is independent of the others so it can be dropped.
		\item $\{p_i\}_{i=1}^n$ can be inferred from $\{\|\xx_i - \xx_j\|^2\}_{ij}$ so it can also be dropped.
		\item By the definition of $\{ \langle \|\xx_i - \xx_j\|^2 \rangle^{(2)} \}_{ij}$, it can be represented by $\{\xx_i\}^{(2)}$ and $\{
			      BV_{ij}^{(2)},
			      \xx_i - \xx_j - \aa_{ij}, \xx_i - \xx_j - \bb_{ij}
			      \}_{i<j}$.
	\end{itemize}
	Now the left hand side (LHS) can be simplified as
	\begin{equation}
		\begin{split}
			LHS =& \mathbb{P}(\xx_i=x_i |
			\{
			\langle\xx_i \rangle^{(2)}
			\}_{i=1}^n, \\
			&~~\{
			BV_{ij}^{(2)},
			\xx_i - \xx_j - \aa_{ij}, \xx_i - \xx_j - \bb_{ij}, \\
			&~~~\| \xx_i - \xx_j \|^2 \}_{i<j})
		\end{split}
	\end{equation}
	Because $\xx_i$ is independent of $\{\langle \xx_i \rangle^{(2)}\}_{i=1}^n$ as well as data independent terms like $\{BV_{ij}^{(2)}, \aa_{ij}^{(1)}, \bb_{ij}^{(1)}\}_{i<j} $, we have
	\begin{equation*}
		LHS = \mathbb{P}(\xx_i=x_i \,\big|\, \| \xx_i - \xx_j \|^2 \}_{i<j})
		\vspace{-2em}
	\end{equation*}
\end{proof}
\bigskip

\theoremsThree*
\begin{proof}
	Suppose that worker $i \in [n]$ copmutes it gradient $\xx_{i}$ based on data $d_i \in \cD$. For the sake of simplicity, let us assume that the arregate model satisfies $\e$-DP. The proof is identical for the more relaxed notion of $(\e, \delta)$-DP fo r$\e \geq 1$. This implies that for any $j \in [n]$ and $d_j, \tilde d_j \in \cD$,
	\begin{equation}
		\frac{\Pr\sbr*{\frac{1}{n} (\sum_{i=1}^n \xx_i(d_i)) + \nu = \yy}}{\Pr\sbr*{\frac{1}{n} (\sum_{i\neq j} \xx_i(d_i)) + \frac{1}{n}\xx_j(\tilde d_j) + \nu = \yy}} \leq \e \,, \forall \yy\,.
		\label{eqn:dp}
	\end{equation}
	Now, we examine the communication received by each server and measure how much information is revealed about any given worker $j \in [n]$. The values stored and seen are:
	\begin{itemize}[nolistsep]
		\item \textbf{S1}: The secret share $(\xx_1 + n\nu)^{(1)}$, $\{\xx_i(d_i)^{(1)}\}_{i=2}^n$ and the sum of other shares $(\xx_1 + n\nu)^{(2)} + \sum_{i=2}^n \xx_i(d_i)^{(2)} = ((\sum_{i=1}^n \xx_i(d_i)) + n\nu)^{(2)}$.
		\item \textbf{S2}: The secret share $(\xx_1 + n\nu)^{(2)}, \{\xx_i(d_i)^{(2)}\}_{i=2}^n$.
		\item Worker $i$: $\zz = (\sum_{i=1}^n \xx_i(d_i)) + n\nu$.\vspace{-1mm}
	\end{itemize}
	The equality above is because our secret shares are \emph{linear}. Now, the values seen by any worker satisfy $\e$-LDP directly by \eqref{eqn:dp}. For the server, note that by the definition of our secret shares, we have for any worker $j$,\vspace{-2mm}
	\begin{align*}
		            & \xx_j(d_j)^{(1)} \text{ is independent of } \xx_j(d_j)                                             \\
		\Rightarrow & \Pr[\xx_j(d_j)^{(1)} = y] = \Pr[\xx_j(d_j)^{(1)} = \tilde y] \,, \forall \yy, \tilde\yy            \\
		\Rightarrow & \Pr[\xx_j(d_j)^{(1)} = y] = \Pr[\xx_j(\tilde d_j)^{(1)} = y]\,, \forall d_j, \tilde d_j \in \cD\,.
	\end{align*}
	A similar statement holds for the second share. This proves that the values computed/seen by the workers or servers satisfy $\e$-LDP.
\end{proof}

\section{Notes on security}

\subsection{Beaver's MPC Protocol} 
\label{sec:beaver_s_mpc_protocol}

\begin{algorithm}[ht]
	\caption{\citet{beaver1991efficient}'s MPC Protocol}
	\label{protocol:beaver}
	\renewcommand{\algorithmiccomment}[1]{#1}
	\begin{algorithmic}
		\STATE \textbf{\underline{input}}: $\langle x \rangle$; $\langle y \rangle$;  Beaver's triple $(\langle a\rangle, \langle b\rangle, \langle c\rangle)$ s.t. $c=ab$
		\STATE \textbf{\underline{output}}: $\langle z \rangle$ s.t. $z=xy$
		\FORALL{party $i$}
		\STATE locally compute $x_i - a_i$ and $y_i - b_i$ and then broadcast them to all parties
		\STATE collect all shares and reveal $x-a=\Sigma_i (x_i - a_i)$, $y-b=\Sigma_i (y_i - b_i)$
		\STATE compute $z_i:=c_i+(x-a)b_i+(y-b)a_i$
		\ENDFOR
		\STATE The first party $1$ updates $z_1:=z_1 + (x-a)(y-b)$
	\end{algorithmic}
\end{algorithm}

In this section, we briefly introduce \citet{beaver1991efficient}'s classic implementations of addition $\langle x+y \rangle$ and multplication $\langle xy \rangle$ given additive secret-shared values $\langle x \rangle$ and $\langle y \rangle$ where each party $i$ holding $x_i$ and $y_i$.
The algorithm for multiplication is given in Algorithm \ref{protocol:beaver}.

\textit{Addition.} The secret-shared values form of sum, $\langle x+y \rangle$, is obtained by simply each party $i$ locally compute $x_i + y_i$.

\textit{Multiplication.}
Assume we already have three secret-shared values called a triple, $\langle a \rangle$, $\langle b\rangle$, and $\langle c\rangle$ such that $c=ab$.

Then note that if each party broadcasts $x_i-a_i$ and $y_i-b_i$, then each party $i$ can compute $x-a$ and $y-b$ (so these values are publicly known), and hence compute
$$
	z_i:=c_i+(x-a)b_i+(y-b)a_i
$$

Additionally, one party (chosen arbitrarily) adds on the public value $(x-a)(y-b)$ to their share so that summing all the shares up, the parties get
$$
	\Sigma_iz_i=c+(x-a)b+(y-b)a + (x-a)(y-b)=xy
$$
and so they have a secret sharing $\langle z \rangle$ of $xy$.

\paragraph{The generation of Beaver's triples.} 
\label{par:the_generation_of_beaver_s_triple_}
There are many different implementations of the offline phase of the MPC multiplication. For example, semi-homomorphic encryption based implementations \citep{keller2018overdrive} or oblivious transfer-based implementations \citep{keller2016mascot}. Since their security and performance have been demonstrated, we may assume the Beaver's triples are ready for use at the initial step of our protocol.

\subsection{Notes on obtaining a secret share}
\label{ssec:ss}
Suppose that we want to secret share a bounded real vector $\xx \in (-B,B]^d$ for some $B \geq 0$. Then, we sample a random vector $\xi$ uniformly from $(-B,B]^d$. This is easily done by sampling each coordinate independently from $(-B,B]$. Then the secret shares become $(\xi, \xx - \xi)$. Since $\xi$ is drawn from a uniform distribution from $[-B,B]^d$, the distribution of $x - \xi$ conditioned on $\xx$ is still uniform over $(-B,B]^d$ and (importantly) independent of $\xx$. All arithmetic operations are then carried out modulo $[-B,B]$ i.e. $ B+1 \equiv -B + 1$ and $-B -1 \equiv B -1$. This simple scheme ensures information theoretic input-privacy for continuous vectors.

The scheme described above requires access to true randomness i.e. the ability to sample uniformly from $(-B,B]$. We make this assumption to simplify the proofs and the presentation. We note that differential privacy techniques such as \citep{abadi2016deep} also assume access to a similar source of true randomness. In practice, however, this would be replaced with a pseudo-random-generator (PRG) \citep{blum1984generate,yao1982theory}.

\subsection{Computational indistinguishability}
\label{ssec:ci}
Let $\{X_n\}$, $\{Y_n\}$ be sequences of distributions indexed by a security parameter $n$ (like the length of the input). $\{X_n\}$ and $\{Y_n\}$ are \textit{computationally indistinguishable} if for every polynomial-time A and polynomially-bounded $\varepsilon$, and sufficiently large $n$
\begin{equation}
	\big|\text{Pr}[A(X_n) = 1] - \text{Pr}[A(Y_n) =1]\big| \le \varepsilon(n)
\end{equation}
If a pseudorandom generator, instead of true randomness, is used in \Cref{ssec:ss} , then the shares are indistinguishable from a uniform distribution over a field of same length. Thus in \Cref{theorem:1} and \Cref{theorem:2}, the secret shares can be replaced by an independent random variable of uniform distribution with negligible change in probability.

\subsection{Notes on the security of \textbf{S2}}
Theorem \ref{theorem:2} proves that \textbf{S2} does not learn anything besides the pairwise distances between the various models. While this does leak some information about the models, \textbf{S2} cannot use this information to reconstruct any $\xx_i$. This is because the pair-wise distances are invariant to translations, rotations, and shuffling of the coordinates of $\{\xx_i\}$.

This remains true even if \textbf{S2} additionally learns the global model too.

\clearpage
\section{Data ownership diagram} 
\label{sec:diagram}
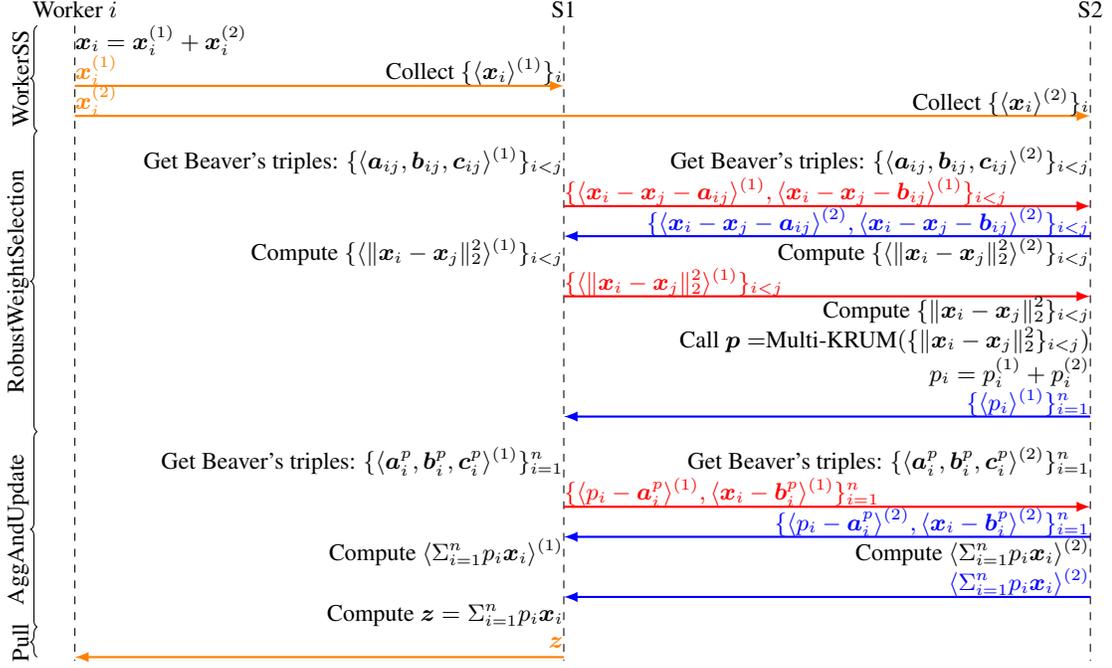
\begin{figure}[bh]
	\begin{centering}
{
\small
\begin{tikzpicture}
	\newcommand\WOffset{0.5}
	\newcommand\SOffset{7}
	\newcommand\SSOffset{14}

	\newcommand\Height{8.45}
	\newcommand\LineH{0.4}
    \newcommand\LocalTextshift{0.4}

    \tikzstyle{SW text}=[draw=none,anchor=south west,inner sep=0]
    \tikzstyle{NE text}=[draw=none,anchor=north east,inner sep=0]
    \tikzstyle{NW text}=[draw=none,anchor=north west,inner sep=0]
    \tikzstyle{SE text}=[draw=none,anchor=south east,inner sep=0]
    \tikzstyle{C text}=[draw=none,inner sep=0]

    \node[draw=none,anchor=south] (W) at (\WOffset, 0) {Worker $i$};
    \node[draw=none,anchor=south] (S1) at (\SOffset, 0) {S1};
    \node[draw=none,anchor=south] (S2) at (\SSOffset, 0) {S2};

    \node[draw=none,below = \Height of W] (WBottom) {};
    \node[draw=none,below = \Height of S1] (S1Bottom) {};
    \node[draw=none,below = \Height of S2] (S2Bottom) {};
    \draw[dashed] (W) -- (WBottom);
    \draw[dashed] (S1) -- (S1Bottom);
    \draw[dashed] (S2) -- (S2Bottom);

    \node[draw=none,rotate=90,anchor=south] (R0) at (0, -1.75*\LineH) {WorkerSS};
	\draw[-,decorate, decoration={brace, raise=0}]  (0, -3.5*\LineH) -- (0, 0);
    \node[draw=none,rotate=90,anchor=south] (R0) at (0, -8.5*\LineH) {RobustWeightSelection};
	\draw[-,decorate, decoration={brace, raise=0}]  (0, -13.5*\LineH) -- (0, -3.5*\LineH);
    \node[draw=none,rotate=90,anchor=south] (R0) at (0, -16.75*\LineH) {AggAndUpdate};
	\draw[-,decorate, decoration={brace, raise=0}]  (0, -20*\LineH) -- (0, -13.5*\LineH);
    \node[draw=none,rotate=90,anchor=south] (R0) at (0, -20.5*\LineH) {Pull};
	\draw[-,decorate, decoration={brace, raise=0}]  (0, -21*\LineH) -- (0, -20*\LineH);


	\node[SW text] (R0WW) at (\WOffset, - \LineH)
		{ $\xx_i=\xx_i^{(1)} + \xx_i^{(2)}$ };

	\draw[-latex,thick,color=orange] (\WOffset, - 2*\LineH) -- (\SOffset, - 2*\LineH);
	\node[SW text] (R0WS1) at (\WOffset, -2*\LineH)
		{\color{orange} $\xx_i^{(1)}$};
	\node[SE text] (R0S1S1) at (\SOffset, -2*\LineH)
		{Collect $\{\langle\xx_i\rangle^{(1)}\}_i $};

	\draw[-latex,thick,color=orange] (\WOffset, - 3*\LineH) -- (\SSOffset, - 3*\LineH);
	\node[SW text] (R0WS2) at (\WOffset, -3*\LineH)
		{\color{orange} $\xx_i^{(2)}$};
	\node[SE text] (R0S2S2) at (\SSOffset, -3*\LineH)
		{Collect $\{\langle\xx_i\rangle^{(2)} \}_i$};

    \newcommand\RoundOffset{4*\LineH}
    \node[NE text] (R1S1Step1) at (\SOffset, -\RoundOffset)
    	{Get Beaver's triples: $\{\langle \aa_{ij}, \bb_{ij}, \cc_{ij} \rangle^{(1)}\}_{i<j}$};
    \draw[-latex,thick,color=red] (\SOffset, - \RoundOffset - 2*\LineH) -- (\SSOffset, - \RoundOffset - 2*\LineH);
    \node[SW text] (R1S1Step2) at (\SOffset, - \RoundOffset - 2*\LineH)
    	{\color{red} $\{\langle \xx_i - \xx_j - \aa_{ij} \rangle^{(1)}, \langle \xx_i - \xx_j - \bb_{ij} \rangle^{(1)}\}_{i<j}$};


    \draw[-latex,thick,color=blue] (\SSOffset, - \RoundOffset - 3*\LineH) -- (\SOffset, - \RoundOffset - 3*\LineH);
    \node[SE text] (R1S2Step2) at (\SSOffset, - \RoundOffset - 3*\LineH)
    	{\color{blue} $\{\langle \xx_i - \xx_j - \aa_{ij} \rangle^{(2)}, \langle \xx_i - \xx_j - \bb_{ij} \rangle^{(2)}\}_{i<j}$};

    \node[NE text] (R1S2Step1) at (\SSOffset, -\RoundOffset)
    	{Get Beaver's triples: $\{\langle \aa_{ij}, \bb_{ij}, \cc_{ij} \rangle^{(2)}\}_{i<j}$};

    \node[SE text] (R1S1Step3) at (\SOffset, - \RoundOffset - 4*\LineH)
    	{Compute $\{\langle \|\xx_i - \xx_j\|_2^2 \rangle^{(1)}\}_{i<j}$};
    \node[SE text] (R1S2Step3) at (\SSOffset, - \RoundOffset - 4*\LineH)
    	{Compute $\{\langle \|\xx_i - \xx_j\|_2^2 \rangle^{(2)}\}_{i<j}$};

    \node[SW text] (R1S1Step4) at (\SOffset, - \RoundOffset - 5*\LineH)
    	{\color{red} $\{\langle \|\xx_i - \xx_j\|_2^2 \rangle^{(1)}\}_{i<j}$};
    \draw[-latex,thick,color=red] (\SOffset, - \RoundOffset - 5*\LineH) -- (\SSOffset, - \RoundOffset - 5*\LineH);
    \node[NE text] (R1S2Step4) at (\SSOffset, -\RoundOffset - 5*\LineH)
    	{Compute $\{\|\xx_i - \xx_j\|_2^2 \}_{i<j}$};

    \node[NE text] (R1S2Step5) at (\SSOffset, -\RoundOffset - 6*\LineH)
    	{Call $\pp=$Multi-KRUM$(\{\|\xx_i - \xx_j\|_2^2 \}_{i<j})$};
    \node[NE text] (R1S2Step6) at (\SSOffset, -\RoundOffset - 7*\LineH)
    	{$p_i = p_i^{(1)} + p_i^{(2)}$};

    \draw[-latex,thick,color=blue] (\SSOffset, - \RoundOffset - 9*\LineH) -- (\SOffset, - \RoundOffset - 9*\LineH);
    \node[SE text] (R1S2Step7) at (\SSOffset, - \RoundOffset - 9*\LineH)
    	{\color{blue} $\{\langle p_i \rangle^{(1)}\}_{i=1}^n$};

    \node[NE text] (R1S1Step1) at (\SOffset, -\RoundOffset - 10*\LineH)
    	{Get Beaver's triples: $\{\langle \aa_i^p, \bb_i^p, \cc_i^p \rangle^{(1)}\}_{i=1}^n$};

    \node[NE text] (R1S2Step1) at (\SSOffset, -\RoundOffset - 10*\LineH)
    	{Get Beaver's triples: $\{\langle \aa_i^p, \bb_i^p, \cc_i^p \rangle^{(2)}\}_{i=1}^n$};


    \draw[-latex,thick,color=red] (\SOffset, - \RoundOffset - 12*\LineH) -- (\SSOffset, - \RoundOffset - 12*\LineH);
    \node[SW text] (R1S1Step2) at (\SOffset, - \RoundOffset - 12*\LineH)
    	{\color{red} $\{\langle p_i - \aa_{i}^p \rangle^{(1)}, \langle \xx_i - \bb_{i}^p \rangle^{(1)}\}_{i=1}^n$};


    \draw[-latex,thick,color=blue] (\SSOffset, - \RoundOffset - 13*\LineH) -- (\SOffset, - \RoundOffset - 13*\LineH);
    \node[SE text] (R1S2Step2) at (\SSOffset, - \RoundOffset - 13*\LineH)
    	{\color{blue} $\{\langle p_i - \aa_{i}^p \rangle^{(2)},\langle \xx_i - \bb_{i}^p \rangle^{(2)}\}_{i=1}^n$};

    \node[SE text] (R1S1Step3) at (\SOffset, - \RoundOffset - 14*\LineH)
    	{Compute $\langle \Sigma_{i=1}^n p_i \xx_i \rangle^{(1)}$};
    \node[SE text] (R1S2Step3) at (\SSOffset, - \RoundOffset - 14*\LineH)
    	{Compute $\langle \Sigma_{i=1}^n p_i \xx_i \rangle^{(2)}$};

    \draw[-latex,thick,color=blue] (\SSOffset, - \RoundOffset - 15*\LineH) -- (\SOffset, - \RoundOffset - 15*\LineH);
    \node[SE text] (R1S2Step7) at (\SSOffset, - \RoundOffset - 15*\LineH)
    	{\color{blue} $\langle \Sigma_{i=1}^n p_i \xx_i \rangle^{(2)}$};

    \node[SE text] (R1S1Step3) at (\SOffset, - \RoundOffset - 16*\LineH)
    	{Compute $\zz=\Sigma_{i=1}^n p_i \xx_i$};

    \node[SE text] (R1S1Step3) at (\SOffset, - \RoundOffset - 16.7*\LineH)
    	{\color{orange}  $\zz$};
    \draw[-latex,thick,color=orange] (\SOffset, - \RoundOffset - 17*\LineH) -- (\WOffset, - \RoundOffset - 17*\LineH);
\end{tikzpicture}

}
	\end{centering}
	\caption{Overview of data ownership and \Cref{protocol:two_server:nonrobust}. The underlying Byzantine-robust oracle is Multi-Krum.}
	\label{f:diagram}
\end{figure}
In \Cref{f:diagram}, we show a diagram of data ownership to demonstrate of the data transmitted among workers and servers. Note that the Beaver's triples are already local to each server so that no extra communication is needed.

\clearpage
\section{Three server model}

In this section, we introduce a robust algorithm with information-theoretical privacy guarantee at the cost of more communication between servers.
We avoid exposing pairwise distances to \textbf{S2} by adding to the system an additional non-colluding server, the \textbf{crypto provider}\citep{wagh2019securenn}. A crypto provider does not receive shares of gradients, but only assists other servers for the multiparty computation.
Now our pipeline for one aggregation becomes: 1) the workers secret share their gradients into 2 parts; 2) the workers send their shares to \textbf{S1} and \textbf{S2} respectively; 3) \textbf{S1}, \textbf{S2} and the crypto provider compute the robust aggregation rule using crypto primitives; 4) servers reveal the output of aggregation and send back to workers.

The \citep{wagh2019securenn} use crypto provider to construct efficient protocols for the training and inference of neural networks.
In their setup, workers secret share their samples to the servers and then servers secure compute a neural network.
In contrast, we consider the federated learning setup where workers compute the gradients and servers perform a multiparty computation of a (robust) aggregation function. The aggregated neural network is public to all.
As the (robust) aggregation function is much simpler than a neural netwuork, our setup is more computationally efficient.
Note that we can directly plug our secure robust aggregation rule into their pipeline and ensure both robustness and privacy-preserving in their setting.

The crypto provider enables servers to compute a variety of functions on secret shared values.
\begin{itemize}[nolistsep,noitemsep]
	\item \textsc{MatMul}: Given $\langle \xx\rangle$ and $\langle \yy\rangle$, return $\langle \xx^\top \yy \rangle$. The crypto provider generates and distribute beaver's triple for multiplication.
	\item \textsc{PrivateCompare}:
	      Given $\langle x \rangle$ and a number $r$, reveal a bit $(x > r)$ to \textbf{S1} and \textbf{S2}, see \Cref{protocol:three_server:privatecompare}. This can be directly used to compare $\langle x \rangle$ and  $\langle y \rangle$ by comparing $\langle x - y \rangle$ and $0$.
	\item \textsc{SelectShare}: Given $\langle \xx\rangle$ and $\langle \yy\rangle$ and $\alpha\in \{0, 1\}$, return
	      $\langle (1-\alpha)\xx + \alpha \yy \rangle$, see \Cref{protocol:three_server:ss}. This function can be easily extended to select one from more quantities.
\end{itemize}
The combination of \textsc{PrivateCompare} and \textsc{SelectShare} enables sorting scalar numbers, like distances.
Thus we can use these primitives to compute Krum on secret-shared values. For other aggregation rules like RFA\cite{pillutla2019robust}, we need other primitives like division.
We refer to \cite{wagh2019securenn} for more primitives like division, max pooling, ReLU. We also leave the details of the three aforementioned primitives in \Cref{sec:three_server}.

\begin{algorithm}[t]
	\caption{Three Server \textsc{MultiKrum}}
	\label{protocol:three_server:mk}
	\renewcommand{\algorithmiccomment}[1]{#1}
	\begin{algorithmic}
		\STATE \textbf{Input:} S1 and S2 hold
		$\{\langle \xx_i \rangle^{(0)}\}_i$ and $\{\langle \xx_i \rangle^{(1)}\}_i$ resp. $f$, $m$
		\STATE \textbf{Output:} $\sum_{i\in\mathcal{I}} \alphav_i \xx_i$ where $\mathcal{I}$ is the set selected by \textsc{MultiKrum}
		\STATE

		\STATE \underline{\textbf{On S1 and S2 and S3}:}
		\STATE \textbf{For} $i$ in $1\ldots n$ \textbf{do}
		\STATE \qquad\textbf{For} $j\neq i$ in $1\ldots n$ \textbf{do}
		\STATE \qquad\qquad Compute $\langle \xx_i - \xx_j \rangle$ locally on S1 and S2
		\STATE \qquad\qquad Call $\mathcal{F}_{\text{MATMUL}}(\{S1, S2\}, S3)$ with $(\langle \xx_i - \xx_j \rangle, \langle \xx_i - \xx_j \rangle)$ and get $\langle d_{ij}\rangle=\langle\|\xx_i - \xx_j\|^2 \rangle$ \par
		\STATE \qquad\textbf{End for}
		\STATE \textbf{End for}
		\STATE Denote $\mathbf{d}=[d_{ij}]_{i<j}$ be a vector of the distances and $\mathbf{X}=[\xx_{1};\ldots;\xx_n]$
		\STATE

		\STATE \underline{\textbf{On S3}:}
		\STATE Let $\pi_1$ and $\pi_2$ be 2 random permutation function.
		\STATE \textbf{For} $i$ in $\pi_1$($1\ldots n$) \textbf{do}
		\STATE \qquad \textbf{For} $j\neq i$ in $\pi_2$($1\ldots n$) \textbf{do}
		\STATE \qquad \qquad Let $\alphav_{ij}$ be the selection vector of $\mathbf{d}$ whose entry for $d_{ij}$ is 1 and all others are 0.
		\STATE \qquad \qquad Compute $\langle\alphav_{ij}\rangle$ and send $\langle\alphav_{ij}\rangle_0$ to S1 and send $\langle\alphav_{ij}\rangle_1$ to S2.
		\STATE \qquad \qquad Call \Cref{protocol:three_server:ss} with input $(\langle\alphav_{ij}\rangle, \{\langle d_{ij}\rangle\}_{i<j})$ and get $\langle d_{ij}'\rangle$. ($d_{ij}=\langle d_{ij}'\rangle^{(0)}+\langle d_{ij}'\rangle^{(1)}$) \par
		\STATE \qquad \textbf{End for}
		\STATE \qquad Sort $\{ \langle d_{ij}'\rangle\}_{j}$ using \Cref{protocol:three_server:privatecompare} to compute $\langle\text{score}_i\rangle=\sum_{i\rightarrow j}\langle d_{ij}'\rangle$
		\STATE \textbf{End for}
		\STATE Sort $\{ \langle \text{score}_i\rangle\}_{i}$ using \Cref{protocol:three_server:privatecompare} and record the $m$ indicies $\mathcal{I}$ with lowset scores.

		\STATE Let $\alphav$ be a selection vector of length $n$ so that for all entry $i\in\mathcal{I}$ are 1 and all others are 0.
		\STATE Compute $\langle\alphav\rangle$ and send $\langle\alphav\rangle^{(0)}$ to S1 and send $\langle\alphav\rangle^{(1)}$ to S2.
		\STATE Compute $\langle\alphav\cdot \mathbf{X}\rangle$ using $\mathcal{F}_{\text{MATMUL}}$.
		\STATE

		\STATE \underline{\textbf{On S1 and S2}:}
		\STATE Let $k=0$ for S1 and $k=1$ for S2
		\STATE \textbf{For} $\tilde{i}$ in $1\ldots n$ \textbf{do}
		\STATE \qquad \textbf{For} $\tilde{j}$ in $1\ldots (n-1)$ \textbf{do}
		\STATE \qquad \qquad Receive $\langle\alphav_{**}\rangle^{(k)}$ from S3
		\STATE \qquad \qquad Call \Cref{protocol:three_server:ss} with input $(\langle\alphav_{**}\rangle, \{\langle d_{ij}\rangle\}_{i<j})$ and get $\langle d_{*\tilde{j}}'\rangle$.
		\STATE \qquad \textbf{End for}
		\STATE \qquad Sort $\{ \langle d_{*\tilde{j}}'\rangle\}_{\tilde{j}}$
		using \Cref{protocol:three_server:privatecompare} to compute $\langle\text{score}_{\tilde{i}}\rangle=\sum_{\tilde{i}\rightarrow \tilde{j}}\langle d_{*\tilde{j}}' \rangle$
		\STATE \textbf{End for}
		\STATE Sort $\{ \langle \text{score}_{\tilde{i}}\rangle\}_{\tilde{i}}$ using \Cref{protocol:three_server:privatecompare}.
		\STATE Receive $\langle\alphav\rangle_k$
		\STATE Compute $\langle\alphav\cdot \mathbf{X}\rangle$ using $\mathcal{F}_{\text{MATMUL}}$.

		\STATE \textbf{S1 and S2:} Open $\langle\alphav\cdot \mathbf{X}\rangle$ to reveal $\sum_{i\in\mathcal{I}} \alphav_i \xx_i$

	\end{algorithmic}
\end{algorithm}

\textbf{Three server MultiKrum.} In \Cref{protocol:three_server:mk} we present a three-server \textsc{MultiKrum} algorithm.
First, \textbf{S1}, \textbf{S2}, and \textbf{S3} compute the pairwise distances $\{\langle d_{ij}\rangle\}_{ij}$, but do not reveal it like \Cref{protocol:two_server:robustdist}.
For each $i$, we use \textsc{PrivateCompare} and \textsc{SelectShare} to sort $\{\langle d_{ij}\rangle\}_{j}$ by their magnitude.
Then we compute $\langle \text{score}_i \rangle$ using \textsc{SelectShare}. Similarly we sort $\{\langle \text{score}_i \rangle \}_i$ and get a selection vector $\alphav$ for workers with lowest scores.
Finally, we open $\langle\alphav\cdot\mathbf{X}\rangle$ and reveal $\sum_{i\in\mathcal{I}} \alphav_i \xx_i$ to everyone.

We remark that sorting the $\{\langle d_{ij}\rangle\}_{j}$ does not leak anything about their absolute or relative magnitude.
This is because: 1) \textbf{S3} picks $i,j$ from $\pi_1,\pi_2$ which is unknown to \textbf{S1} and \textbf{S2}; 2) \textbf{S3} encodes $i,j$ into a selection vector $\alphav_{ij}$ and secret shares it to \textbf{S1} and \textbf{S2}; 3) For \textbf{S1} and \textbf{S2}, they only observe secret-shared selection vectors which is computationally indistinguishable from a random string. Thus $\textbf{S1}$ and $\textbf{S2}$ learn nothing more than the outcome of MultiKrum.
On the other hand, \textsc{PrivateCompare} guarantees the crypto provider \textbf{S3} does not know the results of comparison. So \textbf{S3} also knows nothing more than the output of MultiKrum. Thus \Cref{protocol:three_server:mk} enjoys information-theoretical security.

\subsection{Three server model in SecureNN}\label{sec:three_server}

\begin{algorithm}[t]
	\caption{\textsc{PrivateCompare} $\Pi_{\text{PC}}(\{S_1, S_2\}, S_3)$ \cite[Algo. 3]{wagh2019securenn}}
	\label{protocol:three_server:privatecompare}
	\renewcommand{\algorithmiccomment}[1]{#1}
	\begin{algorithmic}
		\STATE \textbf{Input:} $S_1$ and $S_2$ hold
		$\{ \langle x[i] \rangle_0^p \}_{i\in[\ell]}$ and $\{ \langle x[i] \rangle_1^p \}_{i\in[\ell]}$, respectively, a common input $r$ (an $l$ bit integer) and a common random bit $\beta$. The superscript $p$ is a small prime number like 67.

		\STATE \textbf{Output:} $S_3$ gets a bit $\beta \oplus (x>r)$

		\STATE \textbf{Common Randomness:} $S_1$, $S_2$ hold $\ell$ common random value $s_i\in\mathbb{Z}_p^*$ for all $i\in[\ell]$ and a random permutation $\pi$ for $\ell$ elements. $S_1$ and $S_2$ additionally hold $\ell$ common random values $u_i\in\mathbb{Z}_p^*$.

		\STATE
		\STATE \underline{\textbf{On each $j\in\{0, 1\}$ server $S_{j+1}$}}
		\STATE Let $t=r+1 \mod 2^\ell$
		\STATE \textbf{for} $i=\{\ell,\ell-1,\ldots,1\}$ \textbf{do}

		\STATE \qquad \textbf{if} $\beta=0$ \textbf{then}
		\STATE \qquad \qquad $\langle w_i\rangle_j^p=\langle x[i]\rangle_j^p + j r[i] - 2r[i] \langle x[i]\rangle_j^p$
		\STATE \qquad \qquad $\langle c_i\rangle_j^p= j r[i] - \langle x[i]\rangle_j^p + j + \sum_{k=i+1}^\ell \langle w_k\rangle_j^p$

		\STATE \qquad \textbf{else if} $\beta=1$ \textbf{AND} $r\neq 2^\ell-1$ \textbf{then}
		\STATE \qquad \qquad $\langle w_i\rangle_j^p=\langle x[i]\rangle_j^p + j t[i] - 2t[i] \langle x[i]\rangle_j^p$
		\STATE \qquad \qquad $\langle c_i\rangle_j^p= -j t[i] + \langle x[i]\rangle_j^p + 1-j + \sum_{k=i+1}^\ell \langle w_k\rangle_j^p$

		\STATE \qquad \textbf{else}
		\STATE \qquad \qquad If $i\neq 1$, $\langle c_i\rangle_j^p=(1-j)(u_i+1) - ju_i$, else $\langle c_i\rangle_j^p=(-1)^j\cdot u_i$.
		\STATE \qquad \textbf{end if}
		\STATE \textbf{end for}
		\STATE Send $\{\langle d_i\rangle_j^p \}_i=\pi(\{s_i\langle c_i\rangle_j^p\}_i)$ to $S_3$
		\STATE

		\STATE \underline{\textbf{On server} $\mathbf{S_3}$}
		\STATE For all $i\in[\ell]$, $S_3$ computes $d_i=\textbf{Reconst}^p
			(\langle d_i\rangle_0^p, \langle d_i\rangle_1^p)$ and sets $\beta'=1$ iff $\exists i\in[\ell]$ such that $d_i=0$
		\STATE $S_3$ outputs $\beta'$
	\end{algorithmic}
\end{algorithm}

\begin{algorithm}[t]
	\caption{\textsc{SelectShare} $\Pi_{\text{SS}}(\{S_1, S_2\}, S_3)$ \cite[Algo. 2]{wagh2019securenn}}
	\label{protocol:three_server:ss}
	\renewcommand{\algorithmiccomment}[1]{#1}
	\begin{algorithmic}
		\STATE \textbf{Input:} $S_1$ and $S_2$ hold
		$( \langle \alpha \rangle_0^L, \langle x \rangle_0^L, \langle y \rangle_0^L )$ and $( \langle \alpha \rangle_1^L, \langle x \rangle_1^L, \langle y \rangle_1^L )$, resp.

		\STATE \textbf{Output:} $S_1$ and $S_2$ get $\langle z \rangle_0^L$ and $\langle z \rangle_1^L$  resp., where $z=(1-\alpha)x+\alpha y$.

		\STATE \textbf{Common Randomness:} $S_1$, $S_2$ hold shares of 0 over $\mathbb{Z}_L$ denoted by $u_0$ and $u_1$.

		\STATE For $j\in\{0,1\}$, $S_{j+1}$ compute $\langle w\rangle_j^L=\langle y\rangle_j^L-\langle x\rangle_j^L$.
		\STATE $S_1$, $S_2$, $S_3$ call $\mathcal{F}_{\text{MATMUL}}(\{S_1, S_2\}, S_3)$ with $S_{j+1}$, $j\in\{0,1\}$ having input $(\langle\alpha\rangle_j^L, \langle w\rangle_j^L)$ and $S_1$, $S_2$ learn $\langle c\rangle_0^L$ and $\langle c\rangle_1^L$, resp.

		\STATE For $j\in\{0,1\}$, $S_{j+1}$ outputs $\langle z\rangle_j^L=\langle x\rangle_j^L + \langle c\rangle_j^L + u_j$
	\end{algorithmic}
\end{algorithm}






\textbf{Changes in the notations.} The \Cref{protocol:three_server:ss} and \Cref{protocol:three_server:privatecompare} from SecureNN use different notations. For example they use $\langle w \rangle_j^p$ to represent the share $j$ of $w$ in a ring of $\mathbb{Z}^p$.
Morever, the \Cref{protocol:three_server:privatecompare} secret shares each bit of a number $x$ of length $\ell$ which writes $\{\langle x[i] \rangle^p\}_{i\in[\ell]}$. The $\oplus$ means xor sum.

\clearpage
\section{Example: Two-server protocol with ByzantineSGD oracle}
We can replace MultiKrum with ByzantineSGD in \citep{alistarh2018byzantine}. To fit into our protocol, we make some minor modifications but still guarantee that output is same.
The core part of \citep{alistarh2018byzantine} is listed in \Cref{protocol:byzantinesgd}.

\begin{algorithm}[ht]
	\caption{ByzantineSGD \citep{alistarh2018byzantine}}
	\label{protocol:byzantinesgd}
	\renewcommand{\algorithmiccomment}[1]{#1}
	\begin{algorithmic}
		\STATE \textbf{\underline{input}}:
		$\mathcal{I}$ is the set of good workers, $\{A_i\}_{i\in[m]}$,
		$\{\|B_i - B_j\|\}_{i<j}$ $\{\|\nabla_{k,i} - \nabla_{k, j}\|\}_{i<j}$ ($i,j\in[m]$),
		thresholds $\mathfrak{T}_A, \mathfrak{T}_B> 0$
		\STATE \textbf{\underline{output}}: Subset good workers $\mathcal{S}$

		\STATE $A_{\text{med}} := \text{median}\{A_1, \ldots, A_m\}$;

		\STATE $B_{\text{med}}\leftarrow B_i$ where $i\in[m]$ is any machine s.t.
		$|\{j\in[m]: \|B_j - B_i \| \le \mathfrak{T}_B \}|>m/2$;

		\STATE $\nabla_{\text{med}} \leftarrow \nabla_{k,i}$ where $i\in[m]$ is any machine s.t.
		$|\{j\in[m]: \|\nabla_{k,j} - \nabla_{k, i} \| \le 2\nu \}|>m/2$;

		\STATE $\mathcal{S} \leftarrow \{ i \in \mathcal{I}:
			|A_i - A_{\text{med}}| \le \mathfrak{T}_A \wedge
			\|B_i - B_{\text{med}}\| \le \mathfrak{T}_B \wedge
			\|\nabla_{k,j} - \nabla_{k, i} \| \le 4\nu \} $;
	\end{algorithmic}
\end{algorithm}

The main algorithm can be summarized in \Cref{protocol:two_server:byzantinesgd}, the red lines highlights the changes. Different from Multi-Krum \citep{blanchard2017machine},
\cite{alistarh2018byzantine} uses states in their algorithm. As a result, the servers need to keep track of such states.

\begin{algorithm*}[ht]
	\caption{Two-Server Secure ByzantineSGD}
	\label{protocol:two_server:byzantinesgd}
	\renewcommand{\algorithmiccomment}[1]{#1}
	\begin{algorithmic}
		\STATE \textbf{\underline{Setup}}:
		\begin{itemize}[noitemsep,nolistsep]
			\item $n$ workers, at most $\alpha$ percent of which are Byzantine.
			\item Two non-colluding servers \textbf{S1} and \textbf{S2}
			\item \red{ByzantineSGD Oracle: returns an indices set $\mathcal{S}$.
				      \begin{itemize}[nolistsep,noitemsep]
					      \item With thresholds $\mathfrak{T}_A$ and $\mathfrak{T}_B$
					      \item Oracle state $A_i^{\text{old}}$, $\langle B_i^{\text{old}}\rangle$ for each worker $i$
				      \end{itemize}
			      }
		\end{itemize}

		\STATE \textbf{\underline{Workers}}:
		\begin{enumerate}[nolistsep,noitemsep]
			\item (\textbf{WorkerSecretSharing}): 
			      \begin{enumerate}[nolistsep,noitemsep]
				      \item randomly split private $\xx_i$ into additive secret shares $\langle \xx_i\rangle=\{\xx_i^{(1)}, \xx_i^{(2)}\}$ (such that $\xx_i = \xx_i^{(1)} + \xx_i^{(2)}$)
				      \item sends $\xx_i^{(1)}$ to \textbf{S1} and $\xx_i^{(2)}$ to \textbf{S2}
			      \end{enumerate}
		\end{enumerate}

		\STATE \textbf{\underline{Servers}}:
		\begin{enumerate}[nolistsep,noitemsep]
			\item $\forall~i$, \textbf{S1} collects gradient ${\xx}_i^{(1)}$ and \textbf{S2} collects ${\xx}_i^{(2)}$.
			      \red{
				      \begin{enumerate}[nolistsep,noitemsep]
					      \item Use Beaver's triple to compute $A_i := \langle \langle\xx_i\rangle, \langle\ww - \ww_0\rangle \rangle_{\text{inner}} + A_i^{\text{old}}$
					      \item $\langle B_i \rangle := \langle\xx_i\rangle + \langle B_i^{\text{old}} \rangle$
				      \end{enumerate}
			      }
			\item (\textbf{RobustSubsetSelection}):
			      \begin{enumerate}[nolistsep,noitemsep]
				      \item For each pair $(i,~j)$ of gradients computes their distance $(i < j)$:
				            \begin{itemize}[nolistsep,noitemsep]
					            \red{
					            \item On \textbf{S1} and \textbf{S2}, compute $\langle{B}_i-{B}_j\rangle=\langle{B}_i\rangle-\langle{B}_j\rangle$ locally
					            \item Use precomputed Beaver's triple and \Cref{protocol:beaver}
					                  to compute the distance $\|B_i-B_j\|^2$
					                  }
					            \item On \textbf{S1} and \textbf{S2}, compute $\langle{\xx}_i-{\xx}_j\rangle=\langle{\xx}_i\rangle-\langle{\xx}_j\rangle$ locally
					            \item Use precomputed Beaver's triple and \Cref{protocol:beaver}
					                  to compute the distance $\norm{{\xx}_i-{\xx}_j}^2_2$
				            \end{itemize}
				      \item \red{\textbf{S2} perform Byzantine SGD $\mathcal{S}$=ByzantineSGD($\{A_i\}_i, \{\|B_i - B_j\|\}_{i<j}, \{\norm{{\xx}_i-{\xx}_j}\}_{i<j}, \mathfrak{T}_A, \mathfrak{T}_B$); if $|\mathcal{S}| < 2$, exit; Convert $\mathcal{S}$ to a weight vector $\pp$ of length $n$
				            }

				      \item \textbf{S2} secret-shares $\langle\pp\rangle$ with \textbf{S1}
			      \end{enumerate}
			\item (\textbf{AggregationAndUpdate}):
			      \begin{enumerate}[nolistsep,noitemsep]
				      \item On \textbf{S1} and \textbf{S2}, use MPC multiplication to compute
				            $\langle\sum_{i=1}^n p_i {\xx}_i\rangle$ locally
				      \item \textbf{S2} sends its share of $\langle\sum_{i=1}^n p_i {\xx}_i\rangle^{(2)}$ to \textbf{S1}
				      \item  \textbf{S1} reveals $\zz=\sum_{i=1}^n p_i {\xx}_i$ to all workers.
				            \red{
				      \item \textbf{S2} updates $A_i^{\text{old}} \leftarrow A_i$, $\langle B_i^{\text{old}} \rangle \leftarrow \langle B_i \rangle$
				            }
			      \end{enumerate}
		\end{enumerate}

		\STATE \textbf{\underline{Workers}}:
		\begin{enumerate}[nolistsep,noitemsep]
			\item (\textbf{WorkerPullModel}): Collect $\zz$ and update model $\ww \leftarrow \ww + \zz$ locally
		\end{enumerate}

	\end{algorithmic}
\end{algorithm*}

\end{document}